\newtheorem{definition}{Definition}
\newtheorem{proposition}{Proposition}
\newtheorem{lemma}{Lemma}
\title{Chaining Value Functions for Off-Policy Learning}
\author{
    Simon Schmitt,\textsuperscript{\rm 1}\textsuperscript{\rm 2}
    John Shawe-Taylor,\textsuperscript{\rm 2}
    Hado van Hasselt\textsuperscript{\rm 1}
}
\newcommand{\defeq}{\vcentcolon=}
\newcommand{\tk}[1]{#1_{t}^{k}}
\newcommand{\tpk}[1]{#1_{t+1}^{k}}
\newcommand{\tkm}[1]{#1_{t}^{k-1}}
\newcommand{\tkp}[1]{#1_{t}^{k+1}}
\renewcommand{\t}[1]{#1_{t}}
\newcommand{\tp}[1]{#1_{t+1}}
\renewcommand{\k}[1]{#1^{k}}
\newcommand{\km}[1]{#1^{k-1}}
\newcommand{\kp}[1]{#1^{k+1}}
\newcommand{\trans}[1]{{#1}^\top}
\newcommand{\thetatp}{\tp{\theta}}
\newcommand{\thetatpk}{\tpk{\theta}}
\newcommand{\thetat}{\t{\theta}}
\newcommand{\thetatkp}{\tkp{\theta}}
\newcommand{\thetatkm}{\tkm{\theta}}
\newcommand{\thetatk}{\tk{\theta}}
\newcommand{\thetakp}{\kp{\theta}}
\newcommand{\thetak}{\k{\theta}}
\newcommand{\thetakm}{\km{\theta}}
\newcommand{\thetapi}{\theta_\pi}
\newcommand{\st}{S_{t}}
\newcommand{\stp}{S_{t+1}}
\newcommand{\fst}{\phi(\st)}
\newcommand{\fstp}{\phi(\stp)}
\renewcommand{\vec}[1]{\mathbf{#1}}
\newcommand{\mat}[1]{\mathbf{#1}}
\newcommand{\tdb}{\vec{b}_{\pi}}
\newcommand{\tdbmu}{\vec{b}_{\mu}}
\newcommand{\tdA}{\mat{A}_{\pi}}
\newcommand{\tdAmu}{\mat{A}_{\mu}}
\newcommand{\tdX}{\mat{X}}
\newcommand{\tdY}{\mat{Y}}
\newcommand{\tdW}{\mat{W}}
\newcommand{\tdV}{\mat{C}}
\newcommand{\tdU}{\mat{U}}
\newcommand{\tdProj}{\mat{\Pi}}
\newcommand{\tdM}{\mat{M}}
\newcommand{\Dmu}{\mat{D}_\mu }
\newcommand{\Ppi}{\mat{P}_\pi }
\newcommand{\Expectation}[2][]{\mathbb{E}_{#1}\left[#2\right]}
\newcommand{\setNaturalToK}{\mathbb{Z} . k \leq K}
\newcommand{\norm}[1]{\|#1\|_2}
\newcommand{\tdthetas}{
\begin{bmatrix}
    \theta^0 \\ \theta^1 \\ \vdots \\ \theta^K
\end{bmatrix}
}
\newcommand{\tdMs}{
\begin{bmatrix}
    \tdAmu    &         &      & \dots  &   \vec{0} \\
    -\gamma\tdY   &  \tdX    &    &    & \vdots   \\
     & \ddots & \ddots  & &  \\
    \vdots     &       &   -\gamma\tdY    & \tdX &  \\
    \vec{0}       &   \dots  &   &   -\gamma\tdY    & \tdX
\end{bmatrix}
}
\newcommand{\tdbs}{
\begin{bmatrix}
    \tdbmu \\ \tdb \\ \vdots \\ \tdb
\end{bmatrix}
}
\newcommand{\Dmuroot}{\Dmu^{\frac{1}{2}}}
\newcommand{\Dmuinvroot}{\Dmu^{-\frac{1}{2}}}
\newcommand{\invXY}{\tdX^{-1}\tdY}
\newcommand{\invXYi}[1]{\left( \invXY  \right)^{#1}}
\newcommand{\invXgY}{\tdX^{-1}\tdY\gamma}
\newcommand{\invXgYi}[1]{\left( \invXgY  \right)^{#1}}
\newcommand{\OPE}{\mathrm{OPE}}
\newcommand{\spectral}[1]{\rho\left(#1\right)}
\newcommand{\kth}{$k^{\text{th}}$}
\newcommand{\kmth}{$(k-1)^{\text{th}}$}
\newcommand{\numberth}[1]{${#1}^{\text{th}}$}
\begin{document}

\maketitle

\begin{abstract}
To accumulate knowledge and improve its policy of behaviour, a reinforcement learning agent can learn `off-policy' about policies that differ from the policy used to generate its experience. This is important to learn counterfactuals, or because the experience was generated out of its own control. However, off-policy learning is non-trivial, and standard reinforcement-learning algorithms can be unstable and divergent.

In this paper we discuss a novel family of off-policy prediction algorithms which are convergent by construction. The idea is to first learn on-policy about the data-generating behaviour, and then bootstrap an off-policy value estimate on this on-policy estimate, thereby constructing a value estimate that is partially off-policy. This process can be repeated to build a chain of value functions, each time bootstrapping a new estimate on the previous estimate in the chain. Each step in the chain is stable and hence the complete algorithm is guaranteed to be stable. Under mild conditions this comes arbitrarily close to the off-policy TD solution when we increase the length of the chain. Hence it can compute the solution even in cases where off-policy TD diverges. 

We prove that the proposed scheme is convergent and corresponds to an iterative decomposition of the inverse key matrix. Furthermore it can be interpreted as estimating a novel objective -- that we call a `k-step expedition' -- of following the target policy for finitely many steps before continuing indefinitely with the behaviour policy. Empirically we evaluate the idea on challenging MDPs such as Baird's counter example and observe favourable results.
\end{abstract}

\noindent
Value estimation is key to decision making and reinforcement learning \citep{SuttonBarto:2018}.
To accumulate knowledge and improve its policy of behaviour, an agent can estimate values \emph{off-policy} corresponding to policies that differ from the policy used to generate the experience it learns from. This can be useful to learn counterfactuals, or because the experience was generated out of its own control. Indeed the applications of off-policy learning are manifold: learning to exploit while exploring as e.g. in $\epsilon$-greedy, learning multiple policies concurrently \citep{Sutton:2011,Badia:2020Never}, for representation shaping \citep{jaderberg:2016reinforcement}, to minimize costly mistakes \citep{Hausknecht:2000PlanningTreatment} or to learn from demonstrations~\citep{Hester:2018DeepDemonstrations}.

However, off-policy learning is non-trivial, because standard reinforcement-learning algorithms can be unstable: 
\citep{Baird:1995} showed that off-policy TD predictions can diverge to infinity in what is now known as \emph{Baird's MDP}. \citep{SuttonBarto:2018} attribute this to the popular combination of function approximation (to support large state spaces) and bootstrapping (to reduce variance) in the off-policy context since called the \emph{deadly triad}. Both are essential and ubiquitous in deep reinforcement learning ~\citep{Hasselt:2018Deadly} hence algorithms that are convergent even in the face of the deadly triad are a prominent research direction.

Over the years, several variants and solutions have been proposed \citep{Sutton:2009,Maei:2011,vanHasselt:2014,Sutton:2016}, but these do not uniformly outperform off-policy TD \citep{Hackman:2013} and sometimes suffer from high (even infinite) variance \citep{Sutton:2016}.

In this paper we analyze a novel family of off-policy prediction algorithms that is convergent (i.e. breaks the deadly triad) and conceptually simple.
The idea is to first learn on-policy about the data-generating behaviour, and then bootstrap an off-policy value estimate on this on-policy estimate, thereby constructing a value estimate that is partially off-policy.  This process can be repeated to build a chain of value functions, each time bootstrapping a new estimate on the previous estimate in the chain. Each step in the chain is stable and hence the complete algorithm is guaranteed to be stable.
When employing off-policy TD at each step in the chain we call it \emph{chained TD} learning.
While off-policy TD sometimes diverges and is unable to obtain its own solution (fixed point) we prove that chained TD always converges and that its solution comes arbitrarily close to the off-policy TD solution under mild conditions when we increase the length of the chain.

Interestingly our approach can be interpreted as estimating the value of following the target policy for a finite number of steps $k$ and then following the behaviour indefinitely.
We call this behaviour a 
k-step $\pi$-expedition (\emph{k-step expedition} in short) as the prediction envisions a $k$-steps limited `expedition' following a potentially novel $\pi$ before continuing with the well known behaviour $\mu$. Naturally longer and longer expeditions (larger $k$) approach the target policy. Chained TD exploits the recursive structure of this objective to reduce variance through bootstrapping.
For TD learning -- contrary to estimating the target value directly -- this is guaranteed to be stable as we prove in this paper.

While in practice we use a finite number of value functions we also consider what happens if $k \to \infty$ and use this to acquire insights into the convergence of the popular -- albeit different -- technique of \emph{target networks} \cite{Mnih:2015}.

We prove convergence of the expected chained TD update with a single learning rate and empirically confirm it on Baird's counter example that we augment to include rewards, where TD, TDC, GTD2 and ETD either diverge or make little progress.

\section{Background}
We consider state values $v(s)$ that are parameterised by parameter vector $\theta$---for instance the weights of a neural network. The goal is to approximate the value of each state $s$ under target policy $\pi$, as defined by
\begin{align*}
v_{\pi}(s)
& \defeq \Expectation{\sum_{i=0}^\infty \gamma^i R_{t+i+1} \mid S_t = s} \\
& = \Expectation{R_{t+1} + \gamma v(S_{t+1}) \mid S_t = s} \,.
\end{align*}
Off-policy TD \citep{SuttonBarto:2018} is an iterative process
\begin{equation}
\begin{split}
\thetatp &\defeq \thetat + \alpha \rho_t \left[R_t + \gamma v(\stp) - v(\st)\right] \nabla_{\thetat} v(\st)
\end{split}
\label{eq:stochastic_td}
\end{equation}
where each update aims to improve the parameters $\thetat$ such that the new estimate $v_{\thetatp}$ on average gets closer to the target value $v_\pi$, even when following a different policy $\mu$. Here $\alpha$ is the step-size, $\gamma$ is the discount and $R_t$ is the reward observed when transitioning from state $\st$ to $\stp$ after executing action $A_t \sim \mu(A_t | \st)$.
In update \eqref{eq:stochastic_td}, $\rho_t \defeq \pi(A_t | \st) / \mu(A_t | \st)$ is the \emph{importance-sampling ratio} between the probability of selecting action $A_t$ under the target policy $\pi$ and under the behaviour policy $\mu$ -- not to be confused with the spectral radius
of a matrix $\spectral{\mat{M}}$.
Unfortunately, when using function approximation, convergence of this algorithm can only be guaranteed in the on-policy setting where $\pi=\mu$ \citep{Baird:1995,SuttonBarto:2018}.

This is an actively pursued research area where
a series of solutions have been proposed \citep{Sutton:2009,Maei:2011,vanHasselt:2014,Sutton:2016}, but these often suffer from either performing worse than off-policy TD when it does not diverge \citep{Hackman:2013} or even from infinite variance \citep{Sutton:2016}. 
Our approach is similar in spirit to \citep{DeAsis:2020FixedHorizon} that estimate a new kind of return: fixed horizon returns (i.e. the rewards only from the next $k$ steps) instead of the typical discounted return. This special return can also be estimated through a series of value functions and is guaranteed to converge albeit to a different fixed point.
The special case of chaining for a single step has been considered before: \citep{wiering2007qv} consider bootstrapping an action value off of a state value, which itself is learnt on-policy or off-policy \citep{wiering2009qvfamily}. \citep{mazoure2021improving} consider bootstrapping off of an on-policy estimate with an off-policy multi-step return. These approaches can all be interpreted as performing one step in the more general chained TD algorithms that we consider in this paper.

\section{Chaining Off-Policy Predictors}
We want an off-policy algorithm that is 1) stable (i.e., convergent) and 2) with low bias with respect to the true values $v_{\pi}$.
To this extend we propose a novel family of algorithms and show that it satisfies these desiderata.

\begin{algorithm}[tb]
\caption{\textbf{Sequential chained TD} is described below. \textbf{Concurrent chained TD} is obtained by moving line 2 between line 6 and 7. Note that $T$ needs to be specified large enough to ensure convergence.}
\label{alg:algorithm}
\textbf{Input}: $\pi$, $\mu$, number of chains $K$, number of update steps $T$\\
\textbf{Parameter}: step size $\alpha$
\begin{algorithmic}[1]
\State Initialize all $\{\thetak\}_{k \in \setNaturalToK}$ randomly, $t\gets 0$.
\For{$k \gets 0$ to K}
\For{$i \gets 1$ to T}
\State $t \gets t + 1$
\State Play one action $A_t$ with $\mu$.
\State Observe next state $\stp$ and reward $\tp{R}$.
\If {$k=0$}
\State $\delta \gets \tp{R} + \gamma v_{\theta^0}(\stp) - v_{\theta^0}(\st)$; $\rho \gets 1$
\Else 
\State $\delta \gets \tp{R} + \gamma v_{\thetakm} (\stp) - v_{\thetak}(\st)$
\State $\rho \gets \frac{\pi(A_t| \st)}{\mu(A_t| \st)}$
\EndIf
\State $\thetak \gets \thetak + \alpha \rho \delta \nabla_{\theta} \k{v}(\st)$
\EndFor
\State $\thetakp \gets \thetak$ \Comment{Only used in sequential chained TD.}
\EndFor
\State \textbf{return} $\{\thetatk\}_{k \in \setNaturalToK}$
\end{algorithmic}
\end{algorithm}

\newcommand{\NaturalAndZero}{\mathbb{N}_0}
Starting with the behaviour value $$v^0 \defeq v_\mu$$
the idea is to define a series of value functions $\{v^k\}_{k \in \NaturalAndZero}$ recursively such that they approach the desired target value:
$$ \lim_{k \to \infty} v^k \to v_{\pi} $$
This is achieved recursively by employing an off-policy estimator $\OPE$ such as off-policy TD learning that estimates $\k{v}$ by bootstrapping off the previous value $\km{v}$:
$$ \k{v} \defeq \Expectation[\tau \sim \mu]{ \OPE(\pi, \km{v}, \tau, \mu)} $$
This principle can be applied to any off-policy estimator that employs trajectories $\tau$ sampled from $\mu$ and a bootstrap value $\km{v}$ to predict the values of target policy $\pi$ e.g. 
$ \k{v}(s) \defeq \Expectation[\tau \sim \mu]{\rho_t(R_t + \gamma \km{v}(\stp))  \mid  \st = s} $.

The idea of chaining off-policy estimators has a natural interpretation: $v_\mu$ is the value of the behaviour policy $\mu$ and $v^k$ has the value of at first performing $k$ steps according to the target policy $\pi$ and then following $\mu$ indefinitely. We call such behaviour an \emph{k-step expedition and $v^k$ the \emph{k-step expedition value}.}
\begin{definition}
A \emph{k-step expedition} from state $s$ acts with $\pi$ for k steps and then with $\mu$ indefinitely. Let the \emph{k-step expedition value} of state $s$ be the expected return of a k-step expedition from state $s$.
\end{definition}
As $k$ increases the value $v^k$ becomes more and more off-policy and $v^0$ ultimately becomes irrelevant. This perspective illustrates that typically $v^k \to v_\pi$ as $k$ increases (i.e. that $v^k$ becomes unbiased). While this is easy to see for tabular RL, we analyse bias and convergence in the more general case of function approximation in the following section.

If estimated \emph{sequentially} convergence is guaranteed by induction: $v^0 \defeq v_\mu$ can be estimated on-policy, and hence TD is \emph{stable} (i.e. converges). Then, for each $k > 0$, $v^{k}$ is stable because it bootstraps off a stable $v^{k-1}$. In the next section we prove that convergence is also guaranteed for chained off-policy learning if all parameters are updated \emph{concurrently}, e.g., when learning all value functions online.

A concrete stochastic update for each such value function when transitioning from state $\st$ to $\stp$ and observing a reward $\tp{R}$ is given by
\begin{equation}\label{eq:alg}
\thetatpk \defeq \thetatk + \alpha \rho_t \tk{\delta} \nabla_{\thetatk} \tk{v}(\st) \,,
\end{equation}
where $\thetatk$ are the parameters of the \kth value function after observing $t$ transitions
, $\tk{v}(s) \defeq v_{\thetatk}(s)$
and
\[
    \tk{\delta} \defeq \tp{R} + \gamma \tkm{v}(\stp) - \tk{v}(\st) \,.
\]
We call this \emph{chained (off-policy) TD} learning. \emph{Sequential chained TD} only updates $\thetak$ on timesteps after the previous $\thetakm$ has converged, while \emph{concurrent chained TD} updates all $\{\thetak\}_{k}$ at each timestep (see Algorithm~\ref{alg:algorithm}).

In the next sections we analyse both algorithms
theoretically (Section \ref{sec:theory}) and empirically (Section \ref{sec:experiments}).

\section{Analysis}\label{sec:theory}
To analyse Algorithm~\ref{alg:algorithm} in this section we consider linear function approximation, so that $v_{\theta}(s) = \theta^{\top} \fst$, where $\fst$ are the \emph{features} observed at time $t$. We recall that off-policy TD sometimes diverges and is unable to obtain its own solution (fixed point) $\thetapi\defeq\tdA^{-1}\tdb$, then we show that chained TD is always convergent and can compute $\thetapi$ under mild conditions via the following steps:

\begin{enumerate}
    \item Section~\ref{sec:fixed_point_recursion} observes
    that sequentially chained TD defines a recursion of fixed points: The fixed point $\thetak_*$ of value function $v^k$ can be computed from $\thetakm_*$.
    \item Section~\ref{sec:bias} shows that this recursion approaches the off-policy solution under mild conditions: $\lim_{k\to\infty}\thetak_* = \thetapi$.
    \item Section~\ref{sec:convergence_chained} proves convergence of both expected sequential and concurrent chained TD to the fixed points: $\lim_{t\to\infty}\thetatk = \thetak_*$.
\end{enumerate}
Hence chained TD is convergent for any fixed $k$ and the attained fixed points of the \kth value function $\thetak_*$ indeed approaches the off-policy TD solution $\thetapi \defeq \tdA^{-1} \tdb$ under mild conditions that we investigate further in~\ref{sec:bias}. Then chained TD learning is unbiased wrt. $\thetapi$ in the limit: i.e. $\lim_{k \to \infty} \thetak_* = \thetapi$.

Off-policy TD and chained TD can be analysed through their expected updates which can be written in matrix form. For off-policy TD (Equation~\eqref{eq:stochastic_td}) we obtain:
\begin{equation}
 \thetatp = \thetat + \alpha \left( \tdb - \tdA \thetat \right)
    \label{eq:iterative_preimage}
\end{equation}
and the expected update for chained TD
(Equation~\eqref{eq:alg})
is
\begin{equation}
 \thetatpk
  =\thetatk + \alpha \left( \tdb + \gamma\tdY \thetatkm - \tdX \thetatk \right) \,.
\label{eq:chain_matrix_update}
\end{equation}
with
\begin{align}
\tdb &\defeq \Expectation[\mu]{ \rho_t R_t \fst }, \tdbmu \defeq \Expectation[\mu]{ R_t \fst }, \\
\tdA &\defeq 
    \Expectation[\mu]{ \rho_t \fst \left(\trans{\fst} - \gamma \trans{\fstp} \right)} \\
    &= \trans{\Phi} \Dmu (I - \gamma \Ppi) \Phi = \tdX - \gamma\tdY\\
\tdX &\defeq \Expectation[\mu]{ \rho_t \fst \trans{\fst}} = \trans{\Phi} \Dmu \Phi \\
\tdY &\defeq \Expectation[\mu]{ \rho_t \fst \trans{\fstp}} = \trans{\Phi} \Dmu \Ppi \Phi \\
    \tdProj &\defeq \Phi \left(\trans{\Phi} \Dmu \Phi \right)^{-1}  \trans{\Phi} \Dmu = \Phi \tdX^{-1}  \trans{\Phi} \Dmu
\end{align}
where $\Phi$ is the state-feature matrix, $\Ppi$ is $\pi$'s transition matrix and $\Dmu$ is a diagonal matrix with $\mu$'s steady-state distribution, $\tdA$ is called the \emph{key matrix} and $\tdProj$ is called the \emph{projection matrix}~\cite{SuttonBarto:2018}. We make the common technical assumptions that the columns of $\Phi$ are linearly independent and that $\mu$ covers all states such that $\Dmu$ and hence $\tdX$ have full rank~\cite{Sutton:2016}.

\subsection{Viewing Expected TD as Richardson Iteration}
Expected TD (see equation~\eqref{eq:iterative_preimage}) can be viewed as Richardson Iteration \citep{Richardson:1911} which is a simple and well-studied iterative algorithm that given $\tdM$ and $\vec{b}$ converges to $\theta^{*}=\tdM^{-1}\vec{b}$ under the condition that all eigenvalues of $\tdM$ are positive. Rather than inverting $\tdA$ expected TD learning attempts to determine the solution of $\tdA\thetapi = \tdb$ iteratively through Richardson Iteration and may diverge even though $\tdA$ is invertible.

\begin{definition}
Given a square matrix $\tdM$, vector $\vec{b}$ and step-size $\alpha$ Richardson Iteration computes:
\begin{equation}
\thetatp = \thetat + \alpha \left( \vec{b} - \tdM \thetat \right)
\label{eq:richardson_iteration}
\end{equation}
\end{definition}

\begin{definition}
We call Richardson Iteration \emph{stable} if $\lim_{t\to \infty} \thetat$ converges.
\end{definition}
\begin{proposition}
\label{prop:richardson_convergence}
Let $\theta_1$ be any initial value, $\tdM$ any square matrix with only positive eigenvalues and $\vec{b}$ any compatibly shaped vector then Richardson Iteration
$ \thetat$
converges to $\theta^{*} = \tdM^{-1} \vec{b}$ for a sufficiently small step size $\alpha$.
\end{proposition}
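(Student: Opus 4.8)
The plan is to track the error $e_t \defeq \thetat - \theta^*$ relative to the claimed limit $\theta^* = \tdM^{-1}\vec{b}$, which is well defined because positivity of the eigenvalues guarantees $0$ is not an eigenvalue and hence $\tdM$ is invertible. First I would verify that $\theta^*$ is a fixed point of the iteration: substituting $\thetat = \theta^*$ into \eqref{eq:richardson_iteration} gives $\vec{b} - \tdM\theta^* = 0$, so $\thetatp = \theta^*$. Subtracting this fixed-point identity from the iteration and using $\vec{b} = \tdM\theta^*$ yields the linear error recursion
\begin{equation}
e_{t+1} = (I - \alpha \tdM)\, e_t \,,
\end{equation}
so that $e_t = (I - \alpha\tdM)^{t-1} e_1$.

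Thus convergence $e_t \to 0$ for every initial $\theta_1$ is equivalent to $(I-\alpha\tdM)^t \to 0$, which by the standard spectral-radius criterion holds if and only if $\spectral{I - \alpha\tdM} < 1$. This criterion is valid for an arbitrary, possibly non-diagonalizable, square matrix (via the Jordan normal form, or Gelfand's formula $\spectral{B} = \lim_t \|B^t\|^{1/t}$), so I do not need $\tdM$ to be symmetric or diagonalizable. Since the eigenvalues of $I - \alpha\tdM$ are exactly $1 - \alpha\lambda$ as $\lambda$ ranges over the eigenvalues of $\tdM$, it then remains only to choose $\alpha$ making $|1-\alpha\lambda| < 1$ for all of them.

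For each eigenvalue $\lambda$ I would compute $|1 - \alpha\lambda|^2 = 1 - 2\alpha\,\mathrm{Re}(\lambda) + \alpha^2 |\lambda|^2$. When $\mathrm{Re}(\lambda) > 0$ this is strictly below $1$ precisely for $0 < \alpha < 2\,\mathrm{Re}(\lambda)/|\lambda|^2$. Taking $\alpha$ below the minimum of this bound over the finitely many eigenvalues forces $\spectral{I - \alpha\tdM} < 1$, and hence $e_t \to 0$, i.e. $\thetat \to \theta^*$, which is the claim.

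The main subtlety is that $\tdM$ need not be symmetric (indeed, in the TD application $\tdA$ generally is not), so its eigenvalues may be complex and $I - \alpha\tdM$ may be defective. The positivity hypothesis must therefore be read as positivity of the real parts, $\mathrm{Re}(\lambda) > 0$, and the per-eigenvalue computation above uses exactly this quantity. The only genuine work is the spectral-radius step: justifying that $(I-\alpha\tdM)^t\to 0$ follows from $\spectral{I-\alpha\tdM}<1$ without assuming diagonalizability, which is where I would invoke the Jordan-form or Gelfand argument rather than a naive eigendecomposition; everything else is routine algebra.
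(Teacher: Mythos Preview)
Your argument is correct and follows the same skeleton as the paper's proof: both track the error $e_t = \thetat - \theta^*$, derive the linear recursion $e_{t+1} = (I-\alpha\tdM)e_t$, and then argue that the eigenvalues of $I-\alpha\tdM$ can be pushed inside the unit disk for small $\alpha$. The difference is in how the last step is justified. The paper simply writes $I-\alpha\tdM = \mat{V}\mat{\Lambda}\mat{V}^{-1}$ and concludes from $|\lambda_i|<1$, tacitly assuming $\tdM$ is diagonalizable; you instead invoke the spectral-radius criterion via Jordan form or Gelfand's formula, which works for defective $\tdM$ as well, and you explicitly treat complex eigenvalues by reading ``positive'' as $\mathrm{Re}(\lambda)>0$ and computing the per-eigenvalue bound $0<\alpha<2\,\mathrm{Re}(\lambda)/|\lambda|^2$. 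So your route is strictly more careful: it closes a gap the paper leaves open (diagonalizability is not guaranteed for the non-symmetric TD matrices in question) and makes the step-size condition explicit, at no real extra cost.
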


\begin{proof}
Let $\t{r} = \theta_t - \theta^*$, then
\begin{equation}
\begin{split}
\tp{r} &= \theta_t + \alpha \left( \vec{b} - \tdM \thetat \right)- \theta^* 
    = \theta_t + \alpha \left( \tdM \theta^* - \tdM \thetat \right)- \theta^* \\
    &= \left(I - \alpha \tdM \right) \t{r} 
    = \left(I - \alpha \tdM \right)^t r_0
\end{split}
\end{equation}
Since $\tdM$ has only positive eigenvalues we can pick $\alpha$ such that $I - \alpha \tdM$ satisfies $|\lambda_i| < 1.0$ for all eigenvalues $\lambda_i$. Furthermore we can diagonalize $I - \alpha \tdM = \mat{V} \mat{\Lambda} \mat{V}^{-1}$ such that $\left(I - \alpha \tdM \right)^k = \mat{V} \mat{\Lambda}^t \mat{V}^{-1}$. Since all entries of $\mat{\Lambda}$ have absolute value smaller than 1.0 convergence is ensured $\norm{\theta_t - \theta} = \norm{\t{r}} \to 0$ for $t \to \infty$ and any $\vec{b}$.
\end{proof}

\subsection{Fixed Point Recursion}
\label{sec:fixed_point_recursion}
The expected update of sequential chained TD~\eqref{eq:chain_matrix_update} can also be seen as Richardson Iteration. Once the \kmth value function is estimated and $\thetakm$ is fixed, the chained TD update for the next value and its parameters $\thetak$ converges to a fixed point $\thetak_*$ that depends on $\thetakm$:
\begin{equation}
    \thetak_*(\thetakm) \defeq \lim_{t \to\infty} \thetatk = \tdX^{-1}(\gamma\tdY \thetakm +\tdb)
    \label{eq:chain_recursion}
\end{equation}
convergence follows by Proposition~\ref{prop:richardson_convergence} for a sufficiently small step-size $\alpha$ because $\tdX$ is positive-definite. Should $\thetak$ bootstrap on the fixed point of a previous value $\thetakm_*$, we obtain a recursion of fixed points:
\begin{equation}
    \thetak_* = \tdX^{-1}(\gamma\tdY \thetakm_* +\tdb)
    \label{eq:chain_fixpoint_recursion}
\end{equation}

\subsection{Bias}
\label{sec:bias}
The established fixed point recursion~\eqref{eq:chain_fixpoint_recursion} can be interpreted as a transformation of the unstable off-policy TD inverse problem ("determine $ \thetapi=\tdA^{-1}\tdb $") where Richardson Iteration and hence TD diverge 
into a recursive sequence of stable sub-problems ("given $ \thetakm_*$ determine $ \thetak_*$") that are all stable under Richardson Iteration (see sections~\ref{sec:fixed_point_recursion} and~\ref{sec:convergence_chained}).
In this section we prove under which conditions
$$ \lim_{k \to \infty} \thetak_* = \tdA^{-1}\tdb$$
i.e. that the sequence of fixed points converges to the off-policy TD solution $\thetapi$ as $k$ increases.

\begin{proposition}
\label{prop:biased}
Let $\thetak_*$ denote the fixed point of the $k^{\text{th}}$ chained value function defined as Eq.~\eqref{eq:chain_fixpoint_recursion}. Its bias (distance to the TD off-policy solution $\thetapi\defeq\tdA^{-1} \tdb$) is then given by $\thetak_*-\thetapi=\gamma^k\invXYi{k}\left(\theta^0 - \tdA^{-1} \tdb  \right)$ for any initial value $\theta^0$. 
\end{proposition}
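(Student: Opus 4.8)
The plan is to recognize the off-policy solution $\thetapi = \tdA^{-1}\tdb$ as a fixed point of the very recursion map that generates the chain, and then track how fast the chain's own fixed points contract towards it. First I would rewrite the defining equation $\tdA\thetapi = \tdb$ using the decomposition $\tdA = \tdX - \gamma\tdY$ recalled in the background, obtaining $\tdX\thetapi = \gamma\tdY\thetapi + \tdb$ and hence $\thetapi = \tdX^{-1}(\gamma\tdY\thetapi + \tdb)$. This shows $\thetapi$ satisfies exactly the same relation as the chain fixed points in Eq.~\eqref{eq:chain_fixpoint_recursion}, i.e. $\thetapi$ is itself a fixed point of the map $\theta \mapsto \tdX^{-1}(\gamma\tdY\theta + \tdb)$.

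Next I would introduce the bias vector $r^k \defeq \thetak_* - \thetapi$ and subtract the fixed-point identity for $\thetapi$ from the recursion~\eqref{eq:chain_fixpoint_recursion} for $\thetak_*$. The additive $\tdb$ terms cancel and the common factor can be pulled out, giving the homogeneous error recursion $r^k = \gamma\,\tdX^{-1}\tdY\,r^{k-1}$. Unrolling this $k$ times yields $r^k = (\gamma\,\tdX^{-1}\tdY)^k r^0$, and since $\gamma$ is a scalar it commutes out of the product to give $r^k = \gamma^k \invXYi{k}\, r^0$.

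Finally I would identify the base case: the chain is initialized at $\theta^0$, so $r^0 = \theta^0 - \thetapi = \theta^0 - \tdA^{-1}\tdb$, which substituted above yields precisely the claimed identity $\thetak_* - \thetapi = \gamma^k\invXYi{k}(\theta^0 - \tdA^{-1}\tdb)$. I do not expect a genuine obstacle here: the argument is a one-line, eigenvalue-free telescoping once the fixed-point observation is made, and it requires no assumption on the spectrum of $\invXY$. The only points requiring care are the indexing of the base case and confirming that $\tdX$ is invertible so that the recursion map is well defined — but the latter is guaranteed by the standing assumption that $\tdX = \trans{\Phi}\Dmu\Phi$ has full rank. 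Note that the separate question of whether $\gamma^k\invXYi{k} \to 0$, needed later for $\lim_{k\to\infty} \thetak_* = \thetapi$, is not required for this exact bias formula and is deferred to the surrounding discussion.
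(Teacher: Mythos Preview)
Your argument is correct, but it takes a different route from the paper's. The paper unrolls the recursion~\eqref{eq:chain_fixpoint_recursion} into the explicit closed form $\thetak_* = \tdW_k\,\tdX^{-1}\tdb + \invXgYi{k}\theta^0$ with $\tdW_k=\sum_{i=0}^{k-1}\invXgYi{i}$, then uses the finite geometric-series identity $\mat{I}-\invXgYi{k}=\tdW_k(\mat{I}-\invXgY)=\tdW_k\tdX^{-1}\tdA$ to obtain $\tdW_k\tdX^{-1}=\tdA^{-1}-\invXgYi{k}\tdA^{-1}$, and substitutes back. You instead observe directly that $\thetapi$ is a fixed point of the recursion map, subtract to get the homogeneous error recursion $r^k=\gamma\,\tdX^{-1}\tdY\,r^{k-1}$, and iterate. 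Your argument is shorter and conceptually the standard fixed-point error analysis; it also makes the proposition's independence from any spectral assumption immediately apparent. The paper's more computational route, however, produces the intermediate identity $\tdW_k\tdX^{-1}=\tdA^{-1}-\invXgYi{k}\tdA^{-1}$ (Eq.~\eqref{eq:wk_closed_form}), which is reused in the appendix for the inverse-problem decomposition view of chained TD, so that detour is not wasted even if it is unnecessary for the proposition itself.
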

\begin{proof}
Given any $\theta^0$ (e.g. without loss of generality the fixed point $\theta_\mu$ of the on-policy algorithm estimating $v_\mu$), the sequence~\eqref{eq:chain_fixpoint_recursion} can be written in closed form as:
\begin{equation}
    \thetak_* = \underbrace{\sum_{i=0}^{k-1}\invXgYi{i}}_{\tdW_k} \tdX^{-1}\tdb + \invXgYi{k}\theta^0
    \label{eq:theta_closed_form}
\end{equation}
Since $\tdW_k$ is a geometric series wrt. $\invXgY$ it satisfies:
\begin{align*}
\mat{I} - \invXgYi{k} 
    & =\underbrace{\sum_{i=0}^{k-1}\invXgYi{i} }_{\tdW_k} \left( \mat{I} - \invXgY \right) \\
    &= \tdW_k \tdX^{-1}\left(\tdX-\gamma\tdY\right) \\
    &= \tdW_k \tdX^{-1} \tdA
\end{align*}
Hence 
\begin{equation}
    \tdW_k \tdX^{-1} = \tdA^{-1} - \invXgYi{k}\tdA^{-1}
\label{eq:wk_closed_form}
\end{equation}
Plugging this into the closed form for $\thetak_*$ from equation~\eqref{eq:theta_closed_form}:
\begin{align}
\thetak_*
    &= \tdW_k \tdX^{-1}\tdb + \invXgYi{k}\theta^0 \nonumber \\ 
    &= \tdA^{-1} \tdb - \invXgYi{k}\tdA^{-1} \tdb + \invXgYi{k}\theta^0 \nonumber \\ 
&= \tdA^{-1} \tdb + \underbrace{\gamma^k\invXYi{k}\left(\theta^0 - \tdA^{-1} \tdb  \right)}_{\mathbf{Bias\ wrt. \ \boldsymbol{\thetapi}}} 
\label{eq:theta_bias}
\end{align}
\end{proof}
Observe that $\invXgYi{k}$ can be rewritten in terms of the TD projection $\tdProj$ and the transition matrix $\Ppi$.
\begin{align}
\invXgYi{k} 
    &= \underbrace{ \tdX^{-1} \gamma \trans{\Phi} \Dmu \Ppi}_{\defeq\tdV} \left(\gamma 
    \underbrace{\Phi \tdX^{-1}  \trans{\Phi} \Dmu}_{=\tdProj} \Ppi \right)^{k-1} \Phi \nonumber \\
    &= \tdV \left(\gamma \tdProj \Ppi \right)^{k-1} \Phi
\label{eq:matrix_zp}
\end{align}

While we will see in the next section that chained TD is always convergent for any fixed $k$, Proposition~\ref{prop:biased} allows us to analyze its distance to $\thetapi$. For a fixed $k$ the distance depends on $\theta^0$. 
The distance can be greatly reduced should $\theta^0$ already be close to the solution $\tdA^{-1}\tdb$. Hence a heuristic choice of $\theta^0$ may be beneficial and without loss of generality we chose to use the behaviour value $v_{\mu}$ i.e. $\theta^0=\tdAmu^{-1} \tdbmu$ which is always convergent independently of $\pi$ and has recently been advocated with a single greedification step for offline RL \citep{gulcehre2021addressing,brandfonbrener2021offline}.

\subsubsection{Bias for Infinitely Long Chains ($\mathbf{k\to\infty}$)}
In practice we can only use chains of finite length but we can analyse what happens as the chains get longer. Below we prove that $\thetak_* \to \thetapi$ if $\spectral{\gamma\Ppi\tdProj}<1$.

Here $\tdProj$ is the TD projection and $\Ppi$ the transition matrix.
We can observe that $\spectral{\tdProj}\leq1$ and $\spectral{\Ppi} \leq 1$ hold for any MDP (see appendix). While those are not sufficient conditions to ensure that also $\spectral{\tdProj\Ppi}\leq1$ in practice it often still holds. In the appendix we conjecture and discuss why.
In Figure~\ref{fig:sucess_rate} we investigate this condition numerically: We show how often the provably convergent chained TD is unbiased in the limit of infinite $k$ on random MDPs and observe that it is nearly always the case. On the other hand off-policy TD on the same MDPs diverges in roughly $20\%$ of the cases.

\begin{proposition}
\label{prop:unbiased}
Let $\thetak_*$ denote the fixed point of the $k^{\text{th}}$ chained value function defined as Eq.~\eqref{eq:chain_fixpoint_recursion}. Then the fixed point limit $\theta^\infty_* \defeq \lim_{k\to\infty} \thetak_*$ is equal to $\thetapi\defeq\tdA^{-1} \tdb$ (i.e. $\theta^\infty_*=\thetapi$) for any initial value $\theta^0$ if either $\spectral{\invXgY}<1$ or equivalently $\spectral{\gamma\tdProj\Ppi}<1$.
\end{proposition}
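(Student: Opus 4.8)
The plan is to build directly on the closed-form bias already established in Proposition~\ref{prop:biased}, which gives $\thetak_* - \thetapi = \gamma^k \invXYi{k}\left(\theta^0 - \tdA^{-1}\tdb\right)$. Since $\gamma$ is a scalar that commutes with everything, $\gamma^k\invXYi{k} = \invXgYi{k}$, so the whole claim reduces to showing that the matrix powers vanish, $\lim_{k\to\infty}\invXgYi{k} = \mathbf{0}$; from this $\lim_{k\to\infty}\left(\thetak_* - \thetapi\right) = \mathbf{0}$ follows for \emph{any} initial $\theta^0$, since the bias is a fixed linear map applied to $\theta^0 - \thetapi$. In this sense the proposition is essentially a corollary of Proposition~\ref{prop:biased} together with one standard spectral fact.

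To show $\invXgYi{k}\to\mathbf{0}$ under the hypothesis $\spectral{\invXgY}<1$, I would invoke the well-known fact that the powers of a square matrix converge to the zero matrix precisely when its spectral radius is strictly below one. Concretely, write $\invXgY$ in Jordan normal form $\invXgY = \mat{V}\mat{J}\mat{V}^{-1}$, so that $\invXgYi{k} = \mat{V}\mat{J}^{k}\mat{V}^{-1}$; each Jordan block attached to an eigenvalue $\lambda$ contributes entries of the form $\binom{k}{j}\lambda^{k-j}$, each of which tends to zero whenever $|\lambda|<1$. This is the one place where I cannot simply reuse the diagonalization argument of Proposition~\ref{prop:richardson_convergence}, because $\invXgY$ need not be diagonalizable, and it is the step that I expect to deserve the most care (though it is classical).

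It then remains to prove that the two stated conditions are genuinely equivalent, i.e. that $\spectral{\invXgY} = \spectral{\gamma\tdProj\Ppi}$. Here I would exploit the factorization underlying Equation~\eqref{eq:matrix_zp}: setting $\mat{G} = \tdX^{-1}\trans{\Phi}\Dmu\Ppi$ and $\mat{H} = \Phi$, the definitions $\tdY = \trans{\Phi}\Dmu\Ppi\Phi$ and $\tdProj = \Phi\tdX^{-1}\trans{\Phi}\Dmu$ give $\invXgY = \gamma\,\mat{G}\mat{H}$ while $\gamma\tdProj\Ppi = \gamma\,\mat{H}\mat{G}$. Since $\mat{G}\mat{H}$ and $\mat{H}\mat{G}$ always share the same nonzero eigenvalues, the matrices $\gamma\,\mat{G}\mat{H}$ and $\gamma\,\mat{H}\mat{G}$ have identical spectral radii. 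This establishes the equivalence of the hypotheses and, combined with the vanishing of $\invXgYi{k}$, completes the argument.
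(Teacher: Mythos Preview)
Your proposal is correct and follows essentially the same route as the paper: reduce to Proposition~\ref{prop:biased}, then use the standard fact that $\spectral{\cdot}<1$ forces matrix powers to vanish. The only noteworthy difference is how the second hypothesis is handled. The paper does not prove $\spectral{\invXgY}=\spectral{\gamma\tdProj\Ppi}$ directly; instead it uses the factorization of Equation~\eqref{eq:matrix_zp}, $\invXgYi{k}=\tdV(\gamma\tdProj\Ppi)^{k-1}\Phi$, and argues that $\spectral{\gamma\tdProj\Ppi}<1$ drives the middle factor, and hence the whole product, to zero. Your $\mat{G}\mat{H}$ versus $\mat{H}\mat{G}$ argument is a cleaner way to establish the literal equivalence of spectral radii claimed in the statement, and it avoids having to reopen the $k$-fold product; the paper's version, on the other hand, reuses an identity it has already derived and needs no extra linear-algebra lemma. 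Either is fine here.
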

\begin{proof}
$\spectral{\invXgY}<1 \implies \lim_{k \to \infty} \norm{\invXgYi{k}} = 0$ hence the bias in Proposition~\ref{prop:biased} vanishes as $k \to \infty$. By similar argument from $\spectral{\gamma\tdProj\Ppi}<1$ it follows that $\left(\gamma \tdProj \Ppi \right)^{k-1}$ converges to the zero matrix as $k\to\infty$. Then by Equation~\eqref{eq:matrix_zp} so does $\invXgYi{k}$.
\end{proof}
Hence besides being always convergent (see next section), chained TD can even be unbiased wrt. $\thetapi$ if $\spectral{\gamma\tdProj\Ppi}<1$. In that case the bias in Eq.~\eqref{eq:theta_bias} reduces exponentially with $k$. 
 
\begin{figure}[t]
\centering
\includegraphics[width=0.45\textwidth]{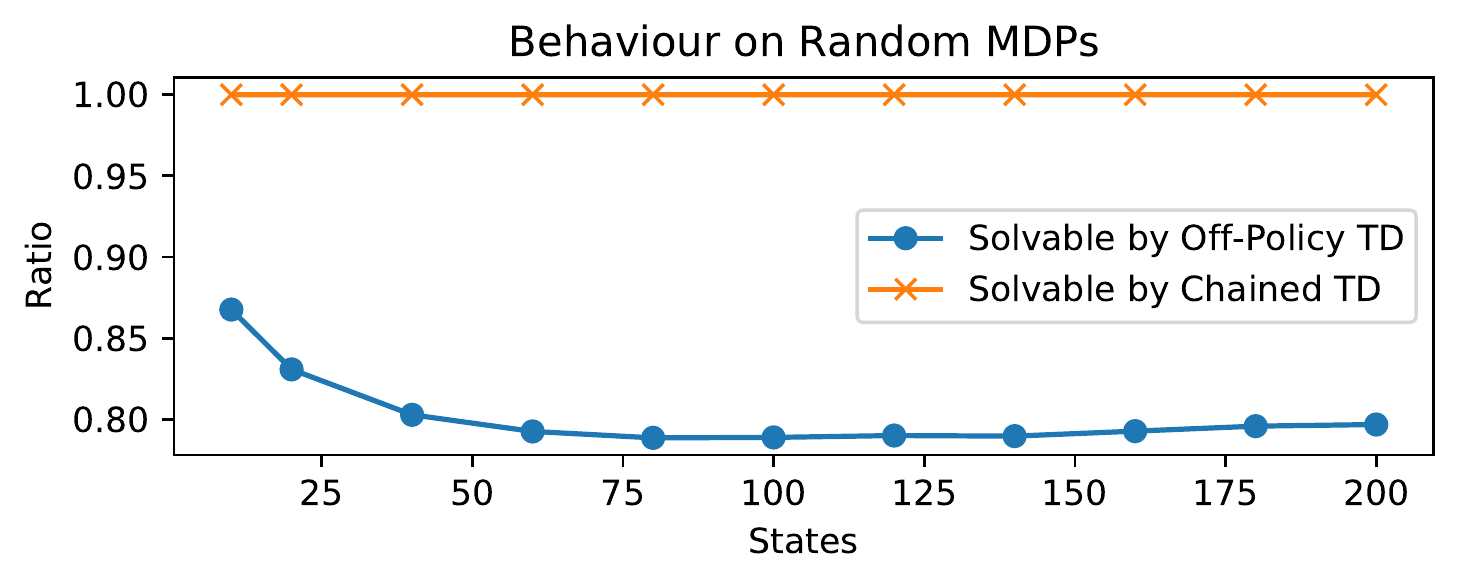}
\caption{
Off-policy TD sometimes diverges and is unable to obtain its own solution (fixed point). On the other hand chained TD always converges and often 
solves the off-policy TD problem to arbitrary precision for sufficiently large $k$
-- even in cases where off-policy TD diverges such as Baird's MDP.
The difference between off-policy TD and chained TD becomes most apparent by looking at their worst case scenarios: Off-policy TD diverges for MDPs such as Baird's or the two-state MDP \cite{Tsitsiklis:1997,Sutton:2016}. Chained TD obtains the target value in both MDPs. The latter can be modified such that chained TD becomes biased (see Section C of the appendix). However chained TD remains convergent i.e never diverges as we have proved.
One may now ask how often each algorithm is able to solve the TD problem to arbitrary precision.
We investigate this numerically by checking their relevant matrices $\tdA$ and $\tdProj\Ppi$ on random MDPs (sampling entries in $\Phi$ normal, rows of $\Ppi$ and the diagonal of $\Dmu$ uniformly and re-normalizing to sum to $1$) with $\gamma=0.99$ and as many features as states. We observe that chained TD solves the TD problem in nearly all cases while off-policy TD diverges in about $20\%$ of the cases.
Note that chained TD remains stable even when it does not solve the problem. Hence one may argue that the worst case scenario of chained TD is favourable.
}
\label{fig:sucess_rate}
\end{figure}

\subsection{Convergence}
\begin{figure*}[t]
\centering
\includegraphics[width=0.3\textwidth]{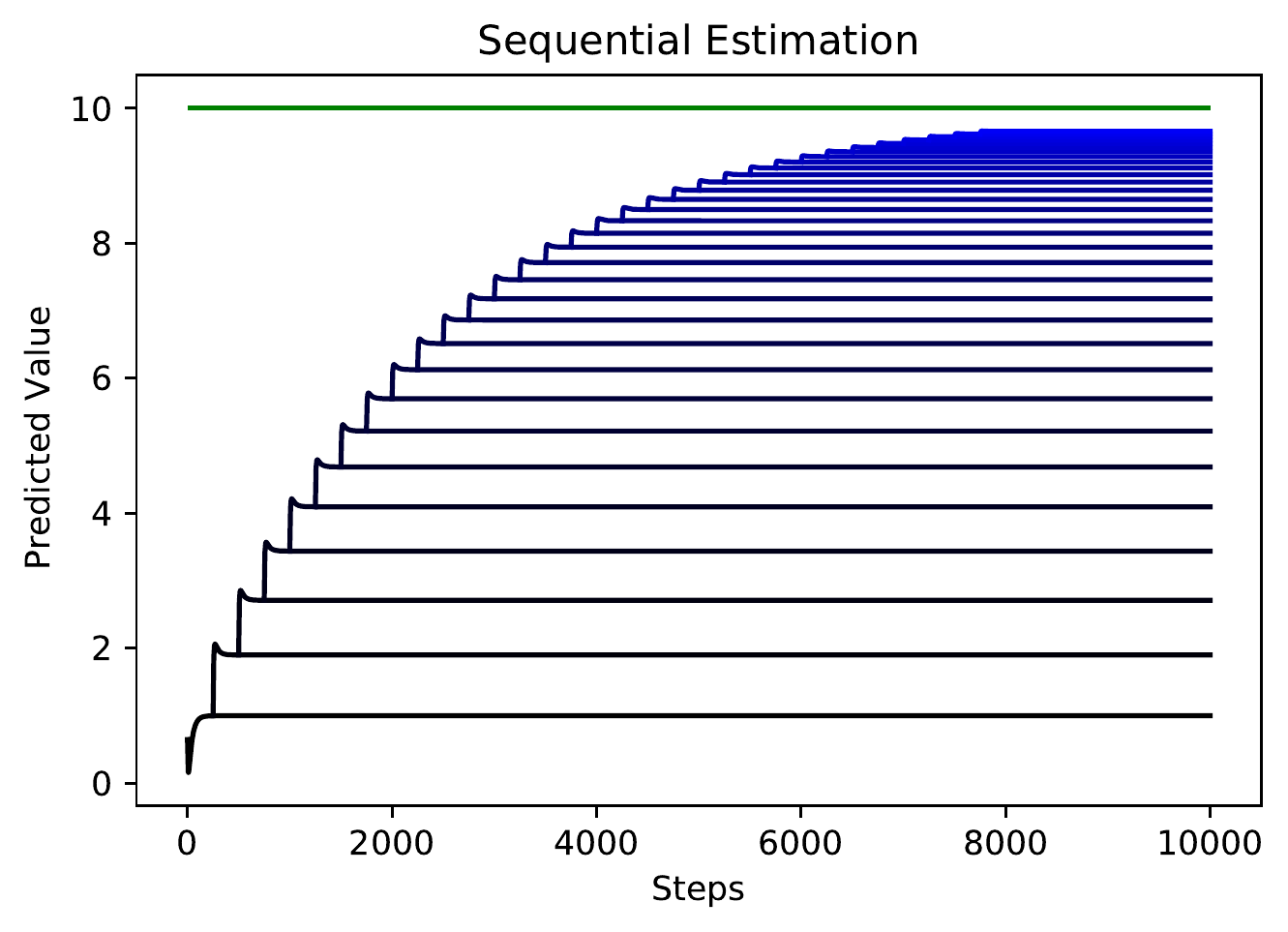}
\includegraphics[width=0.3\textwidth]{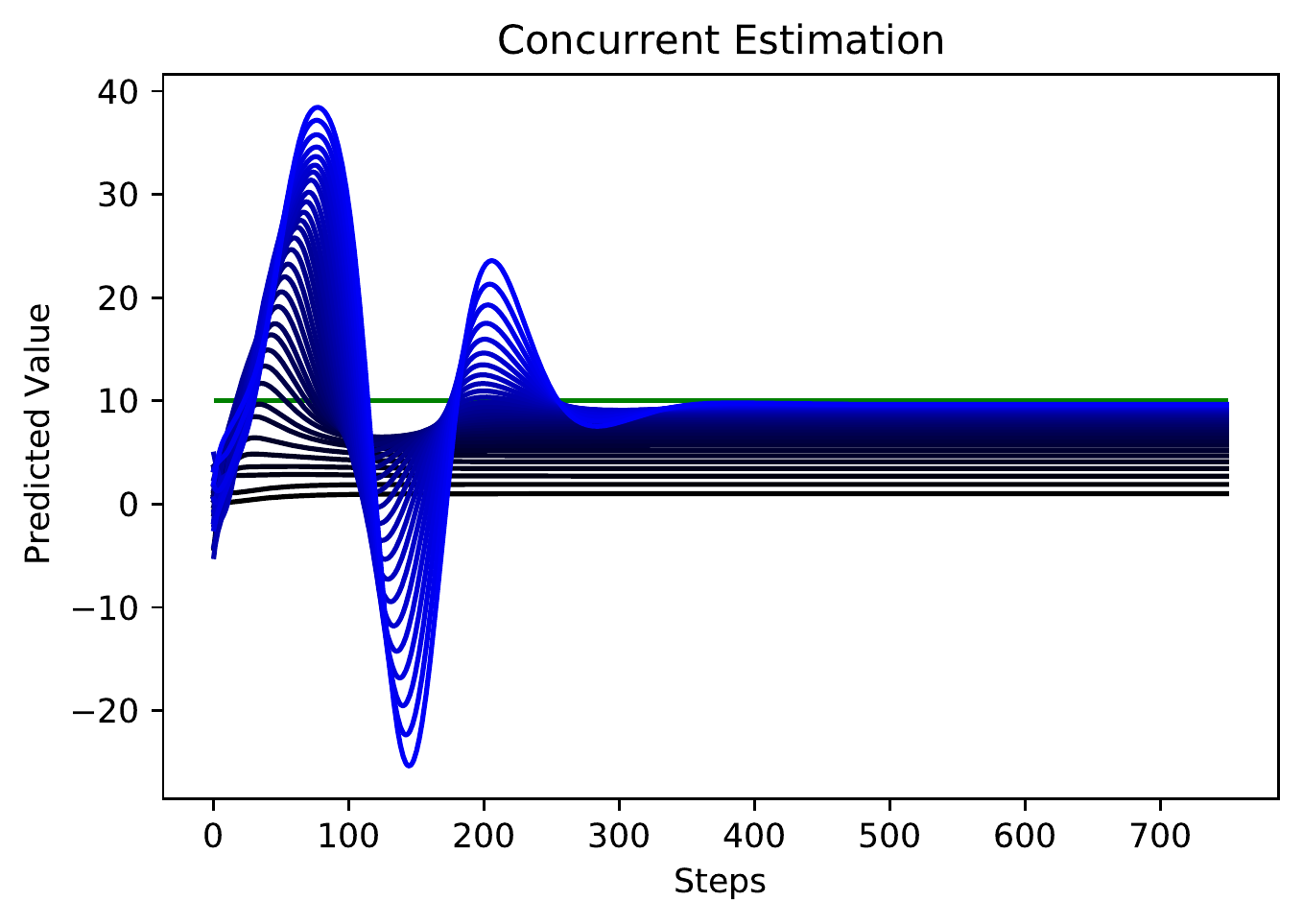}
\includegraphics[width=0.3\textwidth]{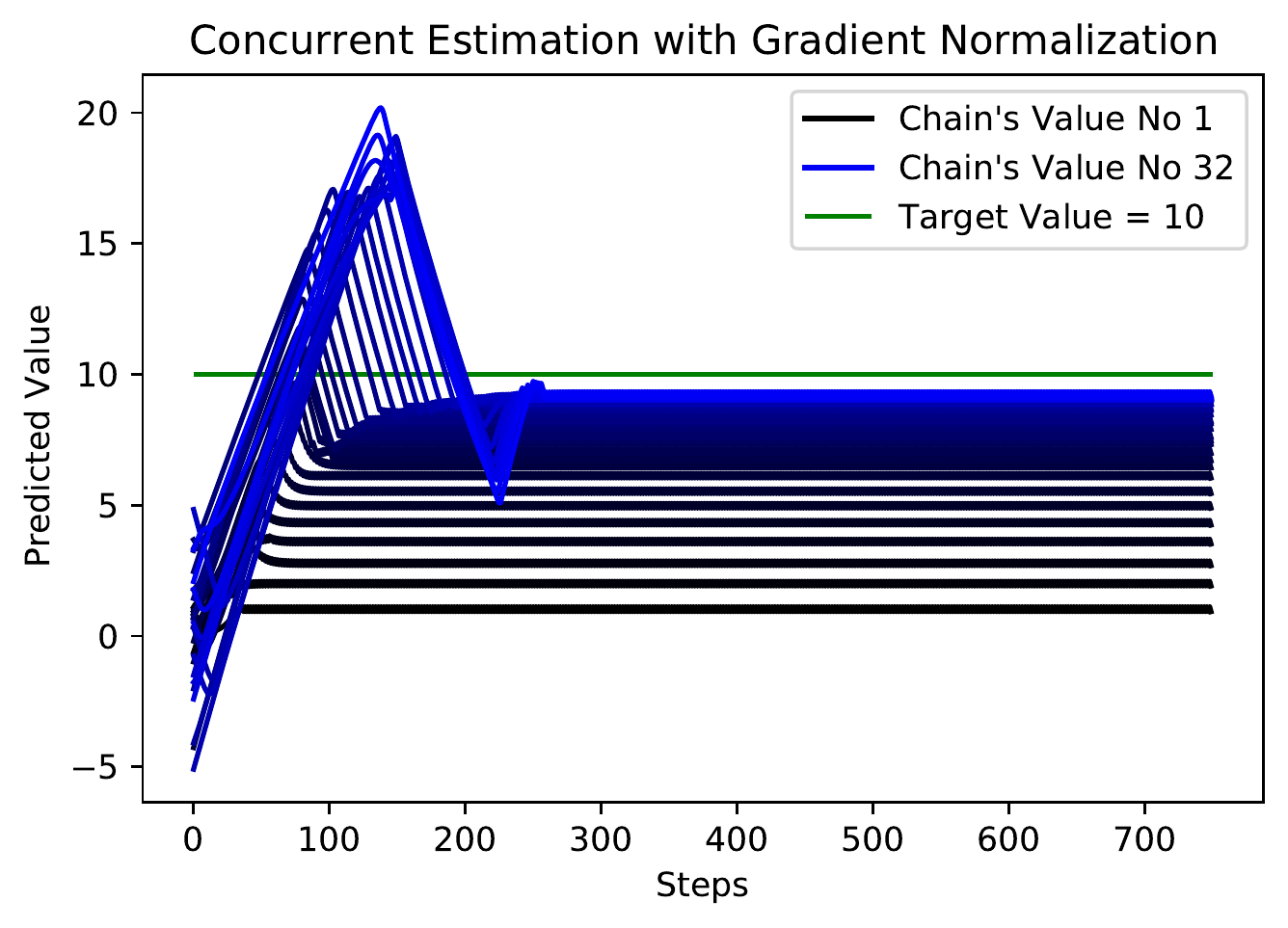}
\caption{Various implementations (all with step-size $\alpha=0.1$) of chained off-policy TD on Baird-Reward with discount $0.9$ evaluated at state 8. Note that the target value at any state is $1/(1-\gamma)=10$ and that all three displayed implementations approach the target-value as $k$ increases. \textbf{Left:} Sequential Estimation. \textbf{Center:} Observe how Concurrent Estimation converges to the same correct results as Sequential Estimation with faster pace but with oscillations prior to reaching the target value. \textbf{Right:} Concurrent Estimation with gradient normalization. Note that the oscillations are reduced and that the predictions approach the target value.}
\label{fig:expected_update}
\end{figure*}

\label{sec:convergence_chained}
The previous section showed when the unstable off-policy-TD inverse problem
$ \thetapi=\tdA^{-1}\tdb $
can be decomposed
into a recursive sequence of sub-problems ("given $\thetakm_*$ determine $ \thetak_*$")
that approach the off-policy TD solution ($\lim_{k\to \infty} \thetak_* = \thetapi$).
We will now show that each $\thetak_*$ can be estimated through TD learning. To do this we prove that the corresponding Richardson Iterations converge. Later we will show that all $\thetak_*$ can be determined concurrently, hence we do not need to wait until $\thetatkm$ has converged before updating $\thetatk$.

\paragraph{Sequential Estimation}

We call \emph{Sequential Estimation} the process where each value function $\k{v}$ bootstraps off the previous value function $\km{v}$ only when the latter has converged. The resulting $\thetakm_*$ is then fixed and used as a TD bootstrap target in Equation~\eqref{eq:chain_matrix_update} to estimate the next $\thetak_*$. Convergence can be proved by induction. Given a convergent initial value e.g. $\theta^0_* \defeq \theta_\mu$ or previous solution $\thetakm_*$ it remains to show that the induction step Equation~\eqref{eq:chain_matrix_update} converges with now fixed bootstrap target $\thetakm_*$.
This update converges to $\thetak_*$ by Proposition~\ref{prop:richardson_convergence} even for unstable $\tdA$ because $\tdX$ is positive definite. Hence sequential estimation is convergent. In Figure~\ref{fig:expected_update} (left) we estimate a sequence of value functions with their expected update for $T=250$ steps each and can observe convergence to the off-policy target value. For sequential estimation we use a strictly optional hot-start heuristic where after each $250$ update steps we initialize the next $\thetatkp$ with the previous solution $\thetak$ to accelerate convergence.

\begin{proposition}
Expected sequential estimation of Chained TD is convergent.
\end{proposition}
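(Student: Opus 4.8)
The plan is to prove convergence by induction on the chain index $k$, at each step reducing the expected update to an instance of Richardson Iteration whose matrix has positive (real) eigenvalues, so that Proposition~\ref{prop:richardson_convergence} applies directly. The crucial structural observation is that \emph{sequential} estimation freezes $\thetakm$ once it has converged; this decouples the update for $\thetak$ from the rest of the chain and replaces the potentially problematic off-policy key matrix $\tdA$ by the always positive-definite matrix $\tdX$.

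For the base case $k=0$ I would note that the $k=0$ update is ordinary on-policy TD for $v_\mu$ (the importance ratio is $\rho=1$), whose expected update is Richardson Iteration with the on-policy key matrix $\tdAmu=\trans{\Phi}\Dmu(\mat{I}-\gamma\Pmu)\Phi$. Since $\Dmu$ carries $\mu$'s stationary distribution, $\tdAmu$ has eigenvalues with positive real part by the classical on-policy argument (Tsitsiklis \& Van Roy), which is exactly the condition under which the associated Richardson Iteration converges; hence $\lim_{t\to\infty}\theta^0_t=\theta^0_*=\tdAmu^{-1}\tdbmu=\theta_\mu$.

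For the inductive step I would assume $\thetakm$ has already converged to a fixed vector $\thetakm_*$. Substituting this constant into the expected chained-TD update~\eqref{eq:chain_matrix_update} gives
\begin{equation*}
\thetatpk=\thetatk+\alpha\left(\underbrace{\tdb+\gamma\tdY\thetakm_*}_{\text{constant }\vec{b}}-\tdX\thetatk\right),
\end{equation*}
which is precisely Richardson Iteration~\eqref{eq:richardson_iteration} with matrix $\tdM=\tdX$ and right-hand side $\vec{b}=\tdb+\gamma\tdY\thetakm_*$. Because the columns of $\Phi$ are linearly independent and $\Dmu$ has full rank, $\tdX=\trans{\Phi}\Dmu\Phi$ is symmetric positive definite and therefore has only positive eigenvalues; Proposition~\ref{prop:richardson_convergence} then yields convergence to $\thetak_*=\tdX^{-1}(\gamma\tdY\thetakm_*+\tdb)$, matching the fixed-point recursion~\eqref{eq:chain_fixpoint_recursion}, for a sufficiently small step size. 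Induction on $k$ closes the argument.

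The step I expect to require the most care is not an inequality to grind through but the conceptual pivot that makes the whole scheme work: recognising that freezing $\thetakm$ turns the bootstrap term into a constant vector, so the iteration matrix governing $\thetak$ is $\tdX$ rather than $\tdA$. All stability then follows from positive-definiteness of $\tdX$, which holds universally, on- and off-policy alike, in contrast to $\tdA$ whose eigenvalues can fail to have positive real part and thereby cause off-policy TD to diverge. A minor secondary point is the base case, where I rely on the classical fact that the on-policy $\tdAmu$ has eigenvalues with positive real part; this is where the assumption that $\Dmu$ encodes $\mu$'s stationary distribution is actually used.
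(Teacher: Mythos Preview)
Your proposal is correct and follows essentially the same approach as the paper: induction on $k$, with the inductive step reducing~\eqref{eq:chain_matrix_update} (with $\thetakm_*$ frozen) to Richardson Iteration governed by the positive-definite matrix $\tdX$, so that Proposition~\ref{prop:richardson_convergence} applies. The paper's formal proof is a one-liner that relies on the surrounding discussion for the inductive structure and the base case; you have simply made those steps explicit.
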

\begin{proof}
Iterating Eq.~\eqref{eq:chain_matrix_update} converges  due to Proposition ~\ref{prop:richardson_convergence} for sufficiently small $\alpha$ because $\tdX$ is positive definite. 
\end{proof}

\paragraph{Concurrent Estimation}
We call \emph{Concurrent Estimation} the process where all value functions in the chain are updated simultaneously at each time step. In contrast to sequential training we do not assume that the previous value function in the chain has converged. This estimation may for example be more convenient for online learning, but requires a new proof of convergence. The proof works as follows: We will show that the matrix $\tdM$ (see \eqref{eq:chain_joint_matrix}) -- corresponding to the joint TD update of all parameters -- has solely positive eigenvalues. Then viewing 
this expected concurrent update (see \eqref{eq:chain_joint_update})
as Richardson Iteration implies the existence of a unique solution and convergence for a suitable step-size $\alpha$.

In Figure~\ref{fig:expected_update} (center) we train a sequence of value functions with their expected concurrent update and observe convergence in accordance with the proposition below. We also observe oscillations in the value predictions in early training.
This effect vanishes eventually as the parameters converge.

Nevertheless such oscillations may be inconvenient and their mitigation provides an interesting direction for future research. We present a simple mitigation technique of gradient normalization to reduce the pre-convergence oscillation magnitude in Figure~\ref{fig:expected_update}~(right).

\subsubsection{The Expected Concurrent Update of Chained TD }
The expected update of all chain parameters $\{\thetak\}_{k \in \setNaturalToK}$ can be written as a joint update in matrix form using one block structured update matrix $\tdM$. 
\newcommand{\tdbjoint}{\vec{b^\dagger}}
\begin{equation}
    \underbrace{\tp{\tdthetas}}_{\boldsymbol{\thetatp}} = \t{\tdthetas}
    + \alpha \left( \underbrace{\tdbs}_{\tdbjoint} - \tdM
    \underbrace{\t{\tdthetas}}_{\boldsymbol{\thetat}} \right) 
\label{eq:chain_joint_update}
\end{equation}
with
\begin{equation}
\tdM \defeq \tdMs
\label{eq:chain_joint_matrix}
\end{equation}

\subsubsection{Fixed Point and Convergence of the Concurrent TD Update}
The formulation above allows us employ Richardson Iteration to analyze the convergence properties of all simultaneously changing parameters by investigating $\tdM$. As we will see $\tdM$ has only positive eigenvalues such that convergence to the unique solution $\boldsymbol{\theta_*} = \tdM^{-1} \tdbjoint$ follows. Contrary to GTD2 and TDC a single step-size suffices.

\begin{proposition}
$\tdM$ has only positive eigenvalues.
\label{prop:chain_joint_positive}
\end{proposition}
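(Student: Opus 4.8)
The plan is to read the eigenvalues straight off the block structure of $\tdM$ in Eq.~\eqref{eq:chain_joint_matrix}. The matrix is block lower-triangular: its block diagonal is a single copy of the on-policy key matrix $\tdAmu$ in the top-left corner followed by $K$ copies of $\tdX$, and the only nonzero off-diagonal blocks are the subdiagonal terms $-\gamma\tdY$. Since the determinant of a block triangular matrix is the product of the determinants of its diagonal blocks, the characteristic polynomial of $\tdM$ factorises as $\det(\tdAmu - \lambda I)\,\det(\tdX - \lambda I)^{K}$. Hence the spectrum of $\tdM$ is exactly the union, with multiplicity, of the spectrum of $\tdAmu$ and the spectrum of $\tdX$, and it suffices to show that both diagonal blocks have only eigenvalues with positive real part (the sense in which ``positive eigenvalues'' is needed to pick a suitable step-size in Proposition~\ref{prop:richardson_convergence}).

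First I would dispatch $\tdX = \trans{\Phi}\Dmu\Phi$. Since $\Dmu$ is diagonal with strictly positive entries (the behaviour covers all states) and the columns of $\Phi$ are linearly independent, $\tdX$ is symmetric positive definite, so all of its eigenvalues are real and strictly positive. This is the same property already used to justify the per-chain Richardson iteration in Section~\ref{sec:fixed_point_recursion}. Next I would handle the on-policy block $\tdAmu = \trans{\Phi}\Dmu(I - \gamma\Pmu)\Phi$. The classical on-policy result \citep{Tsitsiklis:1997,SuttonBarto:2018} is that this matrix is positive definite in the quadratic-form sense, $\trans{x}\tdAmu x > 0$ for every real $x \neq 0$, equivalently that its symmetric part is positive definite; this is exactly the fact that makes $\theta^0 = \tdAmu^{-1}\tdbmu$ ``always convergent'' as noted in Section~\ref{sec:bias}. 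I would then invoke the elementary lemma that any real matrix $M$ with $\trans{x}Mx>0$ for all nonzero real $x$ has only eigenvalues with positive real part: for an eigenpair $(\lambda, v)$ with $v$ possibly complex, $\trans{\bar v}Mv = \lambda\normsq{v}$, and taking real parts isolates the symmetric part, giving $\mathrm{Re}(\lambda)\,\normsq{v} = \tfrac{1}{2}\trans{\bar v}(M + \trans{M})v > 0$, so $\mathrm{Re}(\lambda) > 0$. Applying this to $\tdAmu$ places its eigenvalues in the open right half-plane as well.

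Combining the two blocks, every eigenvalue of $\tdM$ is either a real positive eigenvalue of $\tdX$ or an eigenvalue of $\tdAmu$ with positive real part, which proves the claim. The main obstacle is not the triangular bookkeeping but securing the positive-definiteness of the on-policy key matrix $\tdAmu$; once that classical fact is in hand the result is immediate. A secondary point worth stating explicitly is the interpretation of ``positive eigenvalues'' as ``eigenvalues with positive real part'': since $\tdAmu$ is generally non-symmetric, complex eigenvalues are a priori possible, and the argument only rules out any with nonpositive real part, which is precisely what is required to apply Proposition~\ref{prop:richardson_convergence}.
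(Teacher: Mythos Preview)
Your approach is essentially the paper's own: exploit the block lower-triangular structure of $\tdM$ to reduce its spectrum to the union of the spectra of the diagonal blocks $\tdAmu$ and $\tdX$, then appeal to the positive definiteness of each. You are in fact more careful than the paper, which simply asserts both blocks are ``positive definite'' without distinguishing the symmetric case $\tdX$ from the non-symmetric $\tdAmu$ or clarifying that ``positive eigenvalues'' means positive real part.
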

\begin{proof}
We make use of the fact that the eigenvalues of a triangular block matrix are the union of eigenvalues of the diagonal blocks. The diagonal blocks are $\tdAmu$ and $\tdX$. Since both are positive definite $\tdM$ has positive eigenvalues.
\end{proof}

\begin{proposition}
The expected concurrent update has the same unique fixed point as the sequential update: $\boldsymbol{\theta_*} = [\theta_\mu, \theta^1_*, \cdots, \theta^K_*]$.
\label{prop:chain_same_fixedpoint}
\end{proposition}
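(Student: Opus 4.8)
The plan is to exploit that the concurrent update \eqref{eq:chain_joint_update} is itself a Richardson Iteration driven by the block matrix $\tdM$ from \eqref{eq:chain_joint_matrix} and the stacked vector $\tdbjoint$. By Proposition~\ref{prop:chain_joint_positive} the matrix $\tdM$ has only positive eigenvalues and is therefore invertible, so the fixed-point equation $\tdM \boldsymbol{\theta_*} = \tdbjoint$ has a \emph{unique} solution $\boldsymbol{\theta_*} = \tdM^{-1}\tdbjoint$, to which Proposition~\ref{prop:richardson_convergence} guarantees convergence. Consequently it suffices to exhibit \emph{one} vector that solves this linear system; uniqueness then forces it to be the limit of the concurrent iteration.

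The candidate solution is the stack of the sequential fixed points, $[\theta_\mu, \theta^1_*, \dots, \theta^K_*]$. To verify that it satisfies $\tdM \boldsymbol{\theta_*} = \tdbjoint$, I would simply read off the block rows of the product, using the lower-bidiagonal structure of $\tdM$. The top block row reads $\tdAmu \theta^0 = \tdbmu$, which is exactly the defining equation of the on-policy fixed point $\theta_\mu = \tdAmu^{-1}\tdbmu$, matching the initialization $\theta^0_* \defeq \theta_\mu$ used for the sequential scheme. Each subsequent block row $k \ge 1$ reads $-\gamma\tdY\, \theta^{k-1} + \tdX\, \theta^k = \tdb$, which after multiplying by $\tdX^{-1}$ and rearranging becomes precisely the fixed-point recursion \eqref{eq:chain_fixpoint_recursion}, namely $\theta^k_* = \tdX^{-1}(\gamma\tdY\,\theta^{k-1}_* + \tdb)$. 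Hence every block row is satisfied by construction of the sequential fixed points.

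Putting these together, the stacked sequential fixed points solve the linear system that defines the concurrent fixed point; by uniqueness they coincide with $\tdM^{-1}\tdbjoint$, and the proposition follows. I do not anticipate a genuine obstacle here: the whole argument rests on the block-bidiagonal sparsity pattern of $\tdM$ being an exact matrix encoding of the level-by-level recursion \eqref{eq:chain_fixpoint_recursion}. The only points requiring a little care are bookkeeping ones — aligning the sub-diagonal $-\gamma\tdY$ blocks and the stacked right-hand side $\tdbjoint = [\tdbmu, \tdb, \dots, \tdb]^\top$ with the correct indices, and confirming that the $k=0$ block reproduces the behaviour-value initialization rather than a $\pi$-bootstrapped update.
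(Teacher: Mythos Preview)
Your proposal is correct and follows essentially the same approach as the paper: use Proposition~\ref{prop:chain_joint_positive} to conclude $\tdM$ is invertible (hence the fixed point is unique), then exploit the block lower-bidiagonal structure of $\tdM$ to see that the system $\tdM\boldsymbol{\theta_*}=\tdbjoint$ reproduces exactly the recursion~\eqref{eq:chain_fixpoint_recursion} together with the on-policy equation $\tdAmu\theta^0=\tdbmu$. The only cosmetic difference is that the paper phrases the second step as ``block-wise solving $\tdM^{-1}\tdbjoint$'' (forward substitution) whereas you phrase it as verifying a candidate, which are equivalent.
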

\begin{proof}
From Proposition~\ref{prop:chain_joint_positive} it follows that $\tdM$ is invertible hence
$\boldsymbol{\theta_*} = \tdM^{-1} \tdbjoint$
is the unique fixed point of the joint update.
Block-wise solving $\tdM^{-1} \tdbjoint$ leads to an identical recursion as Equation~\eqref{eq:chain_fixpoint_recursion} -- the sequential fixed points. 
\end{proof}

\begin{proposition}
Expected concurrent chained TD is convergent. The expected update converges to the fixed  point $\boldsymbol{\theta_*} = [\theta_\mu, \theta^1_*, \cdots, \theta^K_*]$ given a suitably small step-size.
\end{proposition}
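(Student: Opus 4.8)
The plan is to treat this as an essentially immediate corollary of the three results already established for the concurrent update, assembled through the Richardson-iteration lens. The expected concurrent update has been written in equation~\eqref{eq:chain_joint_update} exactly in the Richardson form $\boldsymbol{\thetatp} = \boldsymbol{\thetat} + \alpha(\tdbjoint - \tdM\boldsymbol{\thetat})$, so the entire question reduces to verifying the hypotheses of Proposition~\ref{prop:richardson_convergence} for the block matrix $\tdM$ of equation~\eqref{eq:chain_joint_matrix} and then identifying the limit.

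Concretely I would proceed in three short steps. First, I would invoke Proposition~\ref{prop:chain_joint_positive}, which guarantees that $\tdM$ has only positive eigenvalues (its block-triangular structure makes its spectrum the union of the spectra of the positive-definite diagonal blocks $\tdAmu$ and the copies of $\tdX$). Second, I would apply Proposition~\ref{prop:richardson_convergence} directly: since $\tdM$ has only positive eigenvalues, a sufficiently small step-size $\alpha$ makes every eigenvalue of $I - \alpha\tdM$ have modulus strictly below $1$, so the Richardson iteration $\boldsymbol{\thetat}$ converges to $\tdM^{-1}\tdbjoint$. Third, I would close the argument by citing Proposition~\ref{prop:chain_same_fixedpoint}, which already shows that solving $\tdM^{-1}\tdbjoint$ block-wise reproduces the sequential fixed-point recursion~\eqref{eq:chain_fixpoint_recursion}; hence the unique limit is precisely $\boldsymbol{\theta_*} = [\theta_\mu, \theta^1_*, \cdots, \theta^K_*]$, as claimed.

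I do not expect a substantial obstacle, since the heavy lifting was done in the two preparatory propositions; the only point demanding care is that $\tdM$ is block-triangular and therefore need not be diagonalizable, whereas the stated proof of Proposition~\ref{prop:richardson_convergence} is phrased via diagonalization of $I - \alpha\tdM$. I would either note that the convergence conclusion depends only on the spectral radius $\spectral{I - \alpha\tdM} < 1$ (which forces $(I - \alpha\tdM)^t \to 0$ by the Jordan/Gelfand argument even in the defective case), or simply remark that the single-step-size choice guarantees $|1 - \alpha\lambda| < 1$ for every eigenvalue $\lambda$ of $\tdM$, so the error term $(I - \alpha\tdM)^t\vec{r}_0$ still vanishes. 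With that caveat handled, the proposition follows by chaining Propositions~\ref{prop:richardson_convergence}, \ref{prop:chain_joint_positive} and~\ref{prop:chain_same_fixedpoint}.
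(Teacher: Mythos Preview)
Your proposal is correct and matches the paper's own proof essentially verbatim: invoke Proposition~\ref{prop:chain_joint_positive} for positive eigenvalues of $\tdM$, apply Proposition~\ref{prop:richardson_convergence} to get convergence to $\tdM^{-1}\tdbjoint$, and cite Proposition~\ref{prop:chain_same_fixedpoint} to identify this limit with $[\theta_\mu,\theta^1_*,\dots,\theta^K_*]$. Your added remark about possible non-diagonalizability of $\tdM$ is a genuine subtlety the paper's statement of Proposition~\ref{prop:richardson_convergence} glosses over, and your spectral-radius/Jordan fix is the right way to patch it.
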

\begin{proof}
Convergence to $\boldsymbol{\theta_*} = \tdM^{-1} \tdbjoint$ follows from Proposition~\ref{prop:chain_joint_positive} (key matrix $\tdM$ has positive eigenvalues) and Proposition~\ref{prop:richardson_convergence} (positive eigenvalues imply convergence). Then $\boldsymbol{\theta_*} = [\theta_\mu, \theta^1_*, \cdots, \theta^K_*]$ by Proposition~\ref{prop:chain_same_fixedpoint}.
\end{proof}

\section{Empirical Study}
\label{sec:experiments}

\begin{table*}[t]
\centering
\begin{tabular}{ c | c c c | c c c }
 RMSE for MDP & Baird & Baird-Reward & Threestate & Baird & Baird-Reward & Threestate  \\
 with discount  & \multicolumn{3}{c |}{$\gamma=0.9$} & \multicolumn{3}{c}{$\gamma=0.99$}  \\ 
 with reward    & No & Yes & Yes & No & Yes & Yes \\
 \toprule
TD (no correction)                &     0.0 &           10.0 &       10.1 &   0.0 &         99.3 &         102.8 \\
Off-Policy TD    &     div &            div &        div &   div &          div &           div \\
\midrule
ETD                &     0.0 &            div &        0.0 & 136.7 &        div &           div \\
GTD2               &     0.2 &            0.1 &        0.0 &  12.5 &         83.4 &         139.6 \\
TDC                &     0.3 &            0.3 &        0.0 &  13.3 &         87.0 &          43.6 \\
\midrule
Concurrent Chained TD            &     0.0 &            0.4 &        0.1 &   0.0 &         72.6 &          77.9 \\
Sequential Chained TD &     0.0 &            0.0 &        0.0 &   0.0 &          0.0 &           0.2 \\
\bottomrule
\end{tabular}
\caption{\label{tab:best_1_step} 
Evaluation of various 1-step TD algorithms on several MDPs. Observe that MDPs with large discount and rewards (Baird-Reward and Threestate) are the most challenging and that only sequentially chained TD learning obtains RMSE close to 0. Results with RMSE larger than $150$ are considered divergent.
}
\end{table*}

In the previous sections we have shown that the expected update of chained TD is guaranteed to converge for sequential and concurrent parameter updates. Furthermore we have shown that it is unbiased wrt. $\thetapi$ under mild assumptions. In this section we empirically study how the corresponding stochastic update for chained TD converges on a selection of MDPs and observe favourable results. 

We compare to regular off-policy TD and Emphatic Temporal Differences (ETD), and two forms of Gradient Temporal Difference Learning (GTD2 and TDC). All but the foremost are proven to be stable and have different trade-offs in practice. 
In our study we observe that ETD, GTD2 and TDC can suffer more from variance - and may even diverge for that reason - than chained TD if the discount is large $\gamma=0.99$. However they converge faster if the discount is small $\gamma=0.9$.

\subsection{Methodology}
While our method could also be applied offline, here we consider online off-policy learning where the stochastic update samples one transition at at time according to $\mu$ and then updates all parameters using temporal difference learning to estimate $v_\pi$. For chained-TD we bootstrap from the previous value function in the chain, while the first chain estimates $v_\mu$ with TD$(0)$. 

We consider three MDPs all with small discount of $\gamma=0.9$ and large discount $\gamma=0.99$ and evaluate algorithms according to the following experimental protocol: We evaluate the product of all relevant hyper-parameters for $100,000$ transitions and select the result with the lowest mean squared error averaged over the final $50\%$ of transitions and over 10 seeds. We then select the best hyper-parameters and rerun the experiment with 100 new seeds. As hyper-parameters we consider all step-sizes $\alpha$ form the range $S=\{2^{-i/3} | i \in \{1, \dots, 40 \}\}$ (i.e. logarithmically spaced between $9.6\times10^{-5}$ and $0.5$), for GTD2 and TDC we also consider all secondary step-sizes $\beta$ form the same range, for chained TD we consider chains of length $256$ and evaluate the performance of only 9 indices $k \in I=\{2^{i} | i \in \{0, \dots, 8 \}\}$.  This can be seen as a more efficient concurrent equivalent of experimenting with 9 different chain length separately. For sequential chained TD we split the training into windows of $T \in\{25, 50, 100, 200\}$ steps during which only one $\thetak$ is estimated and all others kept unchanged. To prevent pollution from accidentally good initial values we initialize all parameters from a Gaussian distribution with $\sigma=100$ such that errors at $t=0$ are high.

\subsection{Diagnostic Markov Decision Processes}
\subsubsection{Baird's MDP With and Without Rewards}
Baird's MDP is a classic example that demonstrates the divergence of off-policy TD with linear function approximation and has been used to evaluate the convergence of novel approaches. Originally proposed with a discount of $\gamma=0.99$ it is often used with $\gamma=0.9$, which results in lower variance updates. We consider both discounts. Furthermore we introduce a version of Baird's MDP with rewards as the rewards of the classic MDP are all 0. By introducing rewards we are able to investigate the bias of various convergent algorithms. To see why this interesting consider divergent off-policy TD with a large l2 regularization on $\theta$. If the regularization is large enough it will push all parameters to 0, hence the value prediction will be 0 and match the target value of 0. This would be a stable but biased prediction if $v_\pi \neq 0$. To measure the bias we introduce rewards such that $v_\pi=\frac{1}{1-\gamma}$ (i.e $10$ or $100$) and $v_{\mu}=0$ by rewarding each "solid" action with $1$ and each "dashed" action with $-\frac{1}{6}$. We refer to this MDP as the \emph{Baird-Reward MDP}.

\subsubsection{The Threestate MDP}
Inspired by the Twostate MDP~\cite{Tsitsiklis:1997,Sutton:2016} that demonstrates the divergence of off-policy TD concisely without rewards and with only two states, we propose the \emph{Threestate MDP} with one middle state and two border states and two actions: "left" with $-1$ reward and "right" with $1$ reward, leading to the corresponding neighbouring states or remaining if there is no further state in that direction. The starting state distribution is uniform. As with Baird-Reward introducing rewards permits us to measure the bias and convergence speed of various off-policy value predictors. We define
$\Phi = 
    \begin{bmatrix}
    1       & 1 & 1 \\
    1       & 2 & 1 \\
    2       & 2 & 1
\end{bmatrix}$
with full rank such that any state-value combination can be represented by a linear function. Hence any observed bias is entirely due to the evaluated algorithm. The target policy is "right" at all states while the behaviour is uniform. Again we consider $\gamma=0.9$ and $\gamma=0.99$ and observe that $v_\pi=\frac{1}{1-\gamma}$ and $v_{\mu}=0$. 

\begin{figure}[t]
\centering
\includegraphics[width=0.45\textwidth]{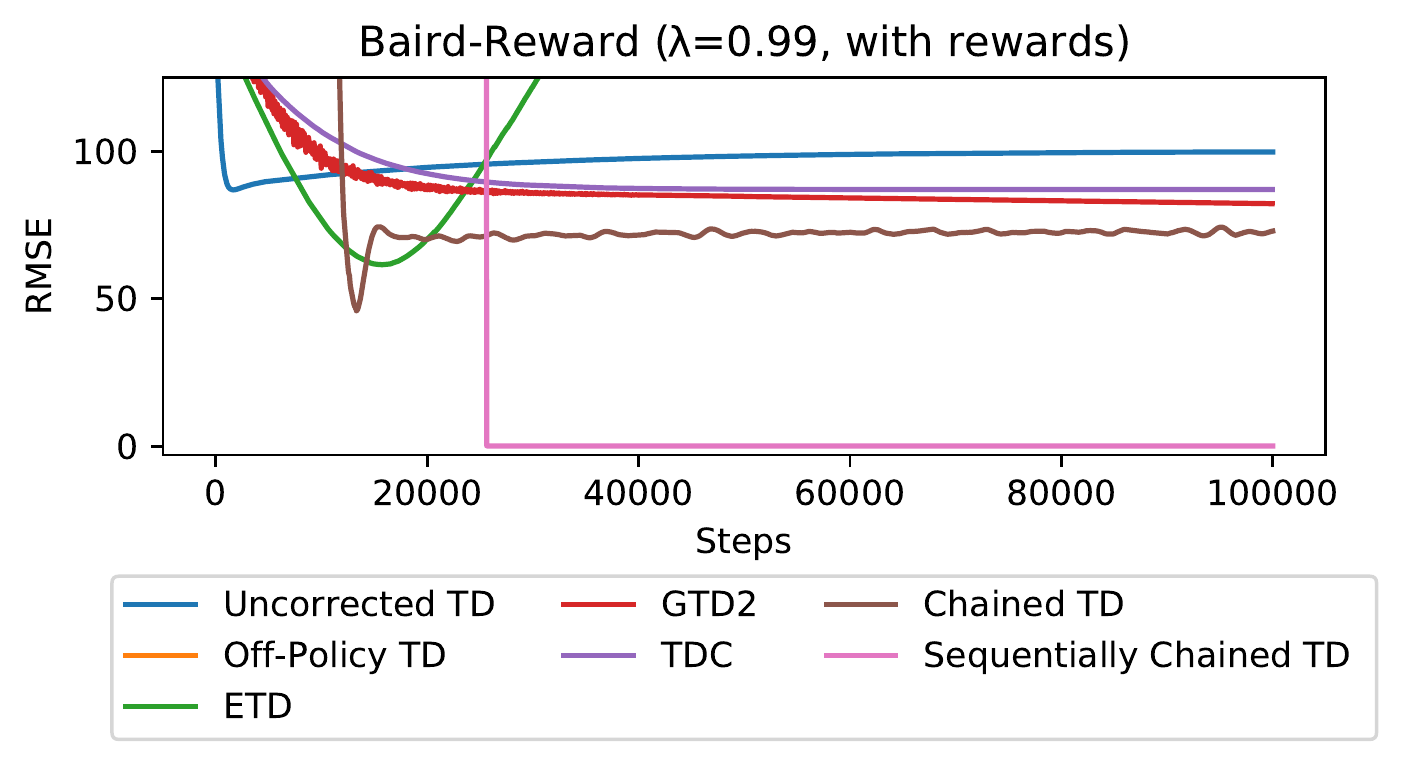}
\caption{Learning process of the 1-step TD algorithms corresponding to Table~\ref{tab:best_1_step} on Baird's MDP with rewards. Observe that chained TD learning reduces the RMSE most with only sequential chained TD learning reducing the error entirely. Off-policy TD diverged and is off the scale.}
\label{fig:comparison_best_selected}
\end{figure}

\subsection{Experimental Results}
\subsubsection{Insights into 1-Step TD Estimators}
In Table~\ref{tab:best_1_step} we evaluate popular TD off-policy value estimators on three MDPs each with two discounts ($\gamma=0.9$ and $\gamma=0.99$) and can observe that the larger discount is more challenging: Only sequential chained TD obtains an RMSE close to $0$ on all MDPs and discounts. 

Furthermore we provide learning curves for Baird-Reward with discount $\gamma=0.99$ in Figure~\ref{fig:comparison_best_selected}. Learning curves corresponding to all entries in the table can be found in the appendix.

At first we note that naive TD estimation (without off-policy correction) of $v_\mu$ is stable but its bias wrt. $v_\pi$ is noticeable in MDPs with rewards (Baird-Reward and Threestate). It is desirable that an off-policy estimator is at least better than this naive baseline. However on Bairds MDP without rewards it inadvertently predicts the correct value, hence we invite the reader to focus on Baird-Reward and Threestate. Next we observe that off-policy TD indeed either diverges or obtains a large error where divergence could be slowed down by a low learning rate.

ETD, GTD2 and TDC mostly fare well where the discount is small $\gamma=0.9$. For $\gamma=0.99$ ETD diverges on the MDPs with rewards. GTD2 and TDC obtain errors on Threestate of $139.6$ and $43.6$ respectively, on Baird-Reward they reduce the RMSE to $83.4$ and $87.0$. 

Concurrent chained TD converges to the true value for small discounts $\gamma=0.9$ and Baird irrespective of discount, while for large discount reducing the error to $72.6$ and $77.9$ on the challenging Baird-Reward and Threestate MDPs.
Finally we observe that sequential chained TD converges close to the true value for all considered MDPs and discounts.

\subsubsection{Chained N-step Estimators}
\begin{figure}[t]
\centering
\includegraphics[width=0.45\textwidth]{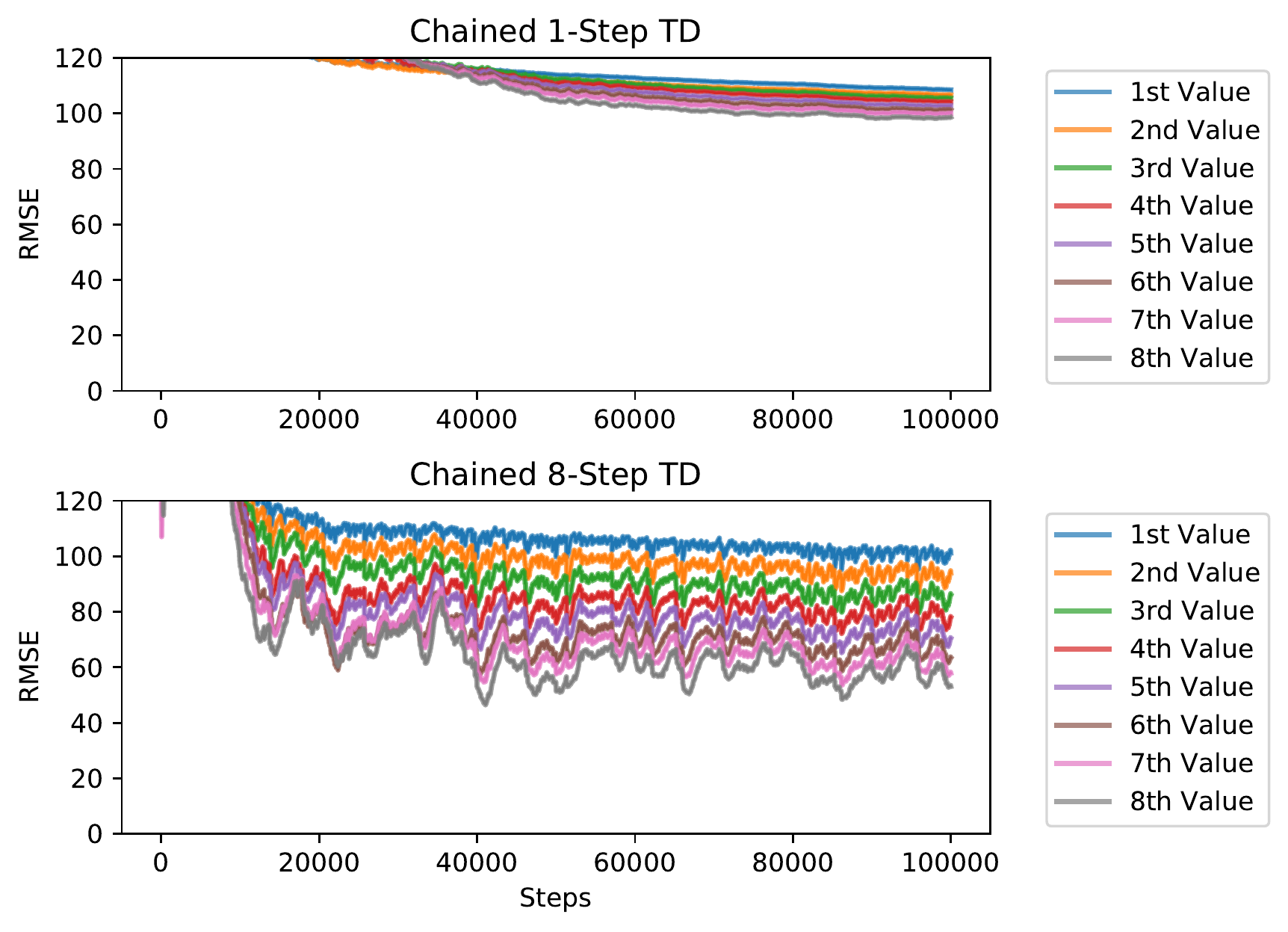}
\caption{Convergence behaviour with increasing $k$ for chains with chained 1-step (\textbf{top}) vs. chained 8-step off-policy TD (\textbf{bottom}). We present the RMSE of first eight \kth-values that are learned concurrently i.e. each bootstrapping off the previous value prediction. Observe how this leads to a sequence of increasingly better predictions.
Finally note that the RMSE of the \numberth{8} 8-step value prediction is lower on Threestate than the concurrent chained TD presented in Table~\ref{tab:best_1_step} which only contains 1-step algorithms.
}
\label{fig:n_step_chains}
\end{figure}

The principle of chaining value functions can also be applied to n-step estimators. N-step estimators predict the value of taking $n$ steps with target policy and then following the policy corresponding to the bootstrap target. Chaining $k$ such estimators results in a total prediction of $m = k \times n$ steps following $\pi$. This allows to predict the $m$-step expedition value $v^m$ with a chain of fewer value functions.

In Figure~\ref{fig:n_step_chains} we confirm this fact empirically on the Threestate MDP with $\gamma=0.99$. One can see that a ($m=8$)-step chain of length $k=8$ attains a much lower RMSE than a ($m=1$)-step chain of the same length.
This suggests that n-step estimators may permit the use of shorter chains.
Using importance sampling estimators to reduce the total length of the chain comes at the cost of increased variance. On the other hand it may come at the benefits of faster convergence and lower bias.
Overall there is a bias, variance and computational complexity trade-off and $n$-step estimators allow to trade this off through the choice of $n$ and $k$.

\section{Conclusion}
We present a novel family of off-policy value prediction algorithms that is convergent by construction. It works through chaining estimators that themselves do not need to be convergent. In particular we prove convergence of sequential and concurrent chained TD, which comes with the intuitive interpretation of estimating the value of a k-step expedition: following $\pi$ for $k$ steps and then following $\mu$ indefinitely.

Furthermore we provide an analytic formula for the bias of chained TD which can be used to derive three insights: 
\begin{itemize}
    \item Sequential chained TD is equivalent to TD with target networks that are switched slowly (i.e.  once the current objective has converged) allowing us to compute the bias of such target-network TD and note $\spectral{\gamma\tdProj\Ppi}<1$ as the precise condition for its convergence. 
    \item Sequential and concurrent chained TD are always convergent but may be biased, while off-policy TD may diverge and yield unbounded values when computing $\thetapi$.
    \item Chained TD is unbiased wrt. $\thetapi$ in the theoretical limit of using infinitely many value functions if $\spectral{\gamma\tdProj\Ppi}<1$ e.g. on Baird's MDP where off-policy TD diverges.
\end{itemize}

Future work may be directed to investigate chaining other updates e.g. chained V-trace \citep{IMPALA}, chained Expected SARSA \citep{vanSeijen:2009}, chained Retrace \citep{Munos:2016Retrace} and to investigate the bias vs. variance trade-off of those chained estimators.
For example better multi-step off-policy returns may lead to faster convergence. Chaining importance-sampling-free Q-learning can be used to estimate values off-policy even if no action probabilities were recorded. This may be useful to learn when the behaviour policy is unknown, e.g., from human demonstrations. 
Finally, for concurrent chaining, where all value functions in the chain are learned at the same time, the choice of which to select for acting may be taken at run-time, and potentially learnt, for example via bandits \cite{Badia:2020Never} or meta-gradients \cite{Sutton:1992IDBD,Xu:2018}.

\section*{Acknowledgements} 
We would like to thank Tom Zahavy and the anonymous AAAI 2022 reviewers for their valuable feedback.

\bibliography{aaai22}

\appendix
\section*{Appendix}

\begin{algorithm}[tb]
\caption{\textbf{Sequential Chained TD} with optional hot-start heuristic.}
\label{alg:algorithm_appendix}
\textbf{Input}: $\pi$, $\mu$, number of chains $K$, number of update steps $T$\\
\textbf{Parameter}: step size $\alpha$
\begin{algorithmic}[1]
\State Initialize all $\{\thetak\}_{k\in \setNaturalToK}$ randomly, $t\gets 0$.
\For{$k \gets 0$ to $K$}
\For{$i \gets 1$ to $T$}
\State $t \gets t + 1$
\State Play one action $A_t$ with $\mu$.
\State Observe next state $\stp$ and reward $\tp{R}$.
\If {$k=0$}
\State $\delta \gets \tp{R} + \gamma \tk{v}(\stp) - \tk{v}(\st)$; $\rho \gets 1$
\Else 
\State $\delta \gets \tp{R} + \gamma \tkm{v}(\stp) - \tk{v}(\st)$
\State $\rho \gets \frac{\pi(A_t| \st)}{\mu(A_t| \st)}$
\EndIf
\State $\thetak \gets \thetak + \alpha \rho \delta \nabla_{\theta} \k{v}(\st)$
\EndFor
\State $\thetakp \gets \thetak$ \Comment{Optional hot-start heuristic.}
\EndFor
\State \textbf{return} $\{\thetak\}_{k\in \setNaturalToK}$
\end{algorithmic}
\end{algorithm}

\begin{algorithm}[tb]
\caption{\textbf{Off-Policy Target Network TD}}
\label{alg:target_net}
\textbf{Input}: $\pi$, $\mu$, number of switches $K$, updates per network $T$\\
\textbf{Parameter}: step size $\alpha$
\begin{algorithmic}[1]
\State Initialize all $\{\thetak\}_{k\in \setNaturalToK}$ randomly, $t\gets 0$.
\For{$k \gets 1$ to $K+1$}
\For{$i \gets 1$ to $T$}
\State $t \gets t + 1$
\State Play one action $A_t$ with $\mu$.
\State Observe next state $\stp$ and reward $\tp{R}$.
\State $\delta \gets \tp{R} + \gamma \tkm{v}(\stp) - \tk{v}(\st)$
\State $\rho \gets \frac{\pi(A_t| \st)}{\mu(A_t| \st)}$
\State $\thetak \gets \thetak + \alpha \rho \delta \nabla_{\theta} \k{v}(\st)$
\EndFor
\State Forget parameters $\theta^{k-1}$.
\State $\thetakp \gets \thetak$
\EndFor
\State \textbf{return} $\theta^{K+1}$
\end{algorithmic}
\end{algorithm}

\section{Relation to Target Networks}
Sequential chained TD has a noteworthy connection to off-policy TD learning with target networks that allows us to obtain insights into the later -- when the target networks are switched slowly. By \emph{slowly} we mean that the previous parameters have sufficiently converged.

\subsubsection{Sequential TD is Similar to Target Network TD} When updating its parameters $\thetak$ target network TD bootstraps from returns corresponding an earlier copy of the parameters $\thetakm$ and switches networks i.e. increases $k$ every $T$ steps (see Algorithm~\ref{alg:target_net}).
Recall that chained TD estimates each $\thetak$ only after the previous $\thetakm$ has been estimated --  i.e. increases $k$ every $T$ steps for some large enough $T$ (see Algorithm~\ref{alg:algorithm_appendix}). 
Overall they perform at total of $TK$ steps with the identical update 
$\thetak \gets \thetak + \alpha \rho \delta \nabla_{\theta} \k{v}(\st)$, that are used to estimate $K$ different parameters for $T$ update steps each. While sequential TD returns the history of $K$ parameters, target network TD only returns the final parameters.

Sequential chained TD has a special update for $k=0$, where it estimates the behaviour value $v_\mu$. As we show in the paper this step is optional and does not impact the theoretical analysis. Furthermore chained TD has an optional hot-start heuristic, which accelerates convergence but does not change the fixed point of the update. If $T$ is chosen sufficiently large to ensure convergence it can be omitted. If we exclude both optional steps we can conclude that both algorithms have the same behaviour and hence same fixed points. In the paper we analyzed the fixed point as $T\to \infty$ i.e. when networks are switched slowly.

\subsection{Bias of Slow Target Network TD}
We will now use the insights from sequentially chained TD to analyze the special case of TD with slowly switching target networks. In practice
such an instance of TD would only switch parameters after the previous have converged.

As we have seen the fixed points are identical to sequentially chained TD in this case, hence the bias wrt. $\thetapi\defeq\tdA^{-1}\tdb$ is also identical. From the paper we recall
\begin{align}
\thetak_*
&= \tdA^{-1} \tdb + \underbrace{\gamma^k\invXYi{k}\left(\theta^0_* - \tdA^{-1} \tdb  \right)}_{\mathbf{Bias\ wrt. \ \boldsymbol{\thetapi}}} 
\label{eq:theta_bias_appendix}
\end{align}

By setting $\theta^0_*$ to a random value we replace the optional first estimation step of chained TD and fully recover target network TD which initially bootstraps from a random value.

\subsection{Convergence of Slow Target Network TD}
Equation~\eqref{eq:theta_bias_appendix} computes the distance of slow target network TD to $\thetapi$ after $k$ network switches. If $\spectral{\gamma\tdProj\Ppi}<1$ or equivalently $\spectral{\invXgY} < 1$ the distance decreases with $k$ and hence slow target network TD converges to $\thetapi$ as $K \to \infty$.

\section{Inverse Problem Decomposition View}
As a corollary of Equation \eqref{eq:wk_closed_form}
in Proposition 2 of the main paper, the condition $\spectral{\invXgY} < 1$ implies

$$\tdA^{-1} = \lim_{k \to \infty}  \sum^{k}_{i=0}\invXgYi{i}\tdX^{-1}$$
which we used implicitly. Chained TD can be viewed as computing the right hand sum i.e. iteratively solving the inverse problem.

\section{Details on Vanishing Bias for Chained TD}

In the paper we show that we can decompose the off-policy TD inverse problem
$\thetapi \defeq \tdA^{-1} \tdb$
in to a sequence of subproblems that are each solved by a value function with parameters $\thetak$.
The update for each $\thetak$ is convergent. Furthermore we can compute the distance to $\thetapi$ and observe that it can be arbitrarily reduced with sufficiently large $k$ when $\spectral{\gamma\tdProj\Ppi}<1$. Note that this may even be the case where $\tdA$ has negative eigenvalues i.e. when TD diverges such as on Baird's MDP.

In practice we only use a finite number of value functions $K$.
In Section 3.2 of the paper we consider the case of $K \to \infty$ i.e. what happens when using chaining infinitely many value functions. We observed that the prediction $\k{v}$ becomes unbiased wrt. $\thetapi$ under mild conditions i.e. when:
\begin{equation}
    \lim_{k \to \infty} \gamma^k \norm{\invXYi{k}} = 0
    \label{eq:power_to_zero}
\end{equation}
This condition is equivalent to:
\begin{equation}
    \spectral{\invXgY} < 1
\end{equation}
and the more interpretable condition
\begin{equation}
    \spectral{\gamma\tdProj\Ppi}<1
\end{equation}
where $\rho$ is the spectral radius.
In the paper we show that the condition is often satisfied for random MDPs with discount $\gamma=0.99$.

Alternatively this could easily achieved by selecting a sufficiently small discount i.e. $\gamma<1/\spectral{\tdProj\Ppi} $. However we are also interested when this happens irrespective of discount -- i.e. even for large discounts $\gamma<1$. In this section we provide additional insights into when this is the case.

\subsection{Structure and Implications of $\mathbf{\tdProj}$}
\newcommand{\tdSFzero}{\begin{bmatrix}
    \mat{I}_{f\times f} & 0 \\
    0 & \mat{0}_{(S-f)\times (S-f)}
\end{bmatrix}_{S\times S}}
\newcommand{\tildeZero}{\mat{\tilde{0}}}

$\tdProj$ has a special structure providing insights into $\spectral{\tdProj\Ppi}$ -- stated in Equations~\eqref{eq:z_eigenvalues} and~\eqref{eq:ir_condition} -- that we formally prove in section~\ref{sec:proof_equivalent_condition}. In section~\ref{sec:conjecture} we employ them to draw insights into the condition $\spectral{\gamma\tdProj\Ppi}<1$. In short, $\tdProj$ has mostly zero eigenvalues if the number of states is much larger than the number of features in an MDP. Furthermore all non-zero eigenvalues are $1.0$.

More formally, we will prove in Proposition~\ref{prop:matrix_zp_special_eigenvalues} that $\tdProj$ can be diagonalized as follows
with $F \leq S$ being the number of features and states and $f\leq F$ being the number of ones 
on the diagonal
\begin{equation} \label{eq:z_eigenvalues}
 \tdProj = \mat{V} \tdSFzero \mat{V}^{-1}
\end{equation}
Hence if $S\gg F$ the fraction of non-zero eigenvalues $\frac{f}{S} \leq \frac{F}{S}$ diminishes as the number of states $S$ increases. 

\subsubsection{Equivalent Condition}
Lemma~\ref{lemma:change_of_variable_does_not_change_eigenvalues} implies that $\spectral{\tdProj\Ppi}=\spectral{\mat{V}\tildeZero\mat{V}^{-1}\Ppi}\leq1$ is equivalent to $\spectral{\tildeZero\mat{V}^{-1}\Ppi\mat{V}}\leq1$.
Hence the condition $\spectral{\tdProj\Ppi}\leq 1$ is equivalent to
\begin{equation} \label{eq:ir_condition}
    \spectral{ \underbrace{\tdSFzero}_{\tildeZero} \underbrace{\mat{V}^{-1} \Ppi \mat{V} }_{\mat{Z}}} \leq 1
\end{equation}
with $\spectral{\tildeZero} \leq 1$ and $\spectral{\mat{Z}} \leq 1$ as we show in Proposition~\ref{prop:matrix_z_spectral_bound}.

\subsection{Conjecture}
\label{sec:conjecture}
We conjecture that $\spectral{\tdProj\Ppi} > 1$ is unlikely for random MDPs when the number of states $S$ is much larger than the number of features $F$.

\paragraph{Intuition}
In Equation~\eqref{eq:ir_condition} stating the equivalent condition  -- that we formally prove in the next sub-section -- the matrix
$\tildeZero$ has almost entirely 0 entries. Hence it reduces any vector that it is multiplied with by setting its components to $0$ unless said vector is chosen adversarially. In random MDPs $\mat{Z}$ depends on $\Ppi$ and random $\Phi$ and is hence not chosen adverserially. Accidentally encountering an adversarial $\mat{Z}$ under the constraint that 
$\spectral{\mat{Z}} \leq 1$
seems to become increasingly unlikely when $S \gg F$ which is an acceptable assumption in practice.

\subsection{Proof for the Equivalent Condition}
\label{sec:proof_equivalent_condition}

\begin{lemma}
\label{lemma:change_of_variable_does_not_change_eigenvalues}
$\mat{A}$ has the same eigenvalues as $\mat{B}\mat{A}\mat{B}^{-1}$ for all square matrices $\mat{A}$,  $\mat{B}$ with same shape and $\mat{B}$ full rank.
\end{lemma}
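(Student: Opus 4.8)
The plan is to recognize this as the classical statement that \emph{similar} matrices share the same spectrum, and to prove it via the characteristic polynomial. Since $\mat{B}$ is full rank, $\mat{B}^{-1}$ exists and $\mat{B}\mat{A}\mat{B}^{-1}$ is a genuine similarity transform of $\mat{A}$. The eigenvalues of any square matrix are exactly the roots of its characteristic polynomial $p(\lambda) = \det(\cdot - \lambda\mat{I})$, so it suffices to show that $\mat{A}$ and $\mat{B}\mat{A}\mat{B}^{-1}$ have identical characteristic polynomials as functions of $\lambda$.

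First I would insert the identity $\mat{I} = \mat{B}\mat{B}^{-1}$ into the $\lambda\mat{I}$ term so as to factor $\mat{B}$ out on the left and $\mat{B}^{-1}$ on the right, writing $\mat{B}\mat{A}\mat{B}^{-1} - \lambda\mat{I} = \mat{B}(\mat{A}-\lambda\mat{I})\mat{B}^{-1}$. Then applying multiplicativity of the determinant and $\det(\mat{B})\det(\mat{B}^{-1}) = \det(\mat{B}\mat{B}^{-1}) = \det(\mat{I}) = 1$ collapses the whole expression:
\[
\det(\mat{B}\mat{A}\mat{B}^{-1} - \lambda\mat{I}) = \det\bigl(\mat{B}(\mat{A}-\lambda\mat{I})\mat{B}^{-1}\bigr) = \det(\mat{B})\det(\mat{A}-\lambda\mat{I})\det(\mat{B}^{-1}) = \det(\mat{A}-\lambda\mat{I}).
\]
Hence the two characteristic polynomials coincide identically in $\lambda$, so their roots -- the eigenvalues, counted with algebraic multiplicity -- agree. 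As an alternative (and more constructive) route I would note the eigenvector correspondence: if $\mat{A}\vec{x} = \lambda\vec{x}$ with $\vec{x} \neq 0$, then $\mat{B}\mat{A}\mat{B}^{-1}(\mat{B}\vec{x}) = \mat{B}\mat{A}\vec{x} = \lambda(\mat{B}\vec{x})$, and $\mat{B}\vec{x} \neq 0$ precisely because $\mat{B}$ is full rank; applying the same argument with $\mat{B}^{-1}$ gives the reverse inclusion, so the two spectra are equal as sets.

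There is no genuine obstacle here: the single hypothesis that is actually consumed is that $\mat{B}$ is full rank (equivalently invertible), which is exactly what makes $\mat{B}^{-1}$ well defined and $\det(\mat{B}) \neq 0$. The only care needed is to phrase the conclusion at the level of the spectrum (and hence the spectral radius $\spectral{\cdot}$) so that it slots directly into the application preceding Equation~\eqref{eq:ir_condition}, where the lemma is invoked with $\mat{B} = \mat{V}$ to justify $\spectral{\mat{V}\tildeZero\mat{V}^{-1}\Ppi} = \spectral{\tildeZero\mat{V}^{-1}\Ppi\mat{V}}$.
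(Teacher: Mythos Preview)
Your proposal is correct. Your primary route --- showing the characteristic polynomials coincide via $\det(\mat{B}\mat{A}\mat{B}^{-1}-\lambda\mat{I})=\det(\mat{B})\det(\mat{A}-\lambda\mat{I})\det(\mat{B}^{-1})=\det(\mat{A}-\lambda\mat{I})$ --- differs from the paper's argument, which instead writes an eigendecomposition $\mat{A}=\mat{V}\mat{\Lambda}\mat{V}^{-1}$ and observes that $\mat{B}\mat{A}\mat{B}^{-1}=(\mat{B}\mat{V})\mat{\Lambda}(\mat{B}\mat{V})^{-1}$, so the eigenvalue matrix $\mat{\Lambda}$ is shared. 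Your characteristic-polynomial proof is actually more general: it does not presuppose that $\mat{A}$ is diagonalizable, whereas the paper's version tacitly does (not every square $\mat{A}$ admits $\mat{A}=\mat{V}\mat{\Lambda}\mat{V}^{-1}$ with invertible $\mat{V}$). Your alternative eigenvector-correspondence argument is essentially the paper's idea carried out one eigenvalue at a time, and with the reverse inclusion made explicit; it is the cleaner version of what the paper sketches. Either of your routes is fine for the lemma as stated, and the characteristic-polynomial argument has the advantage of automatically preserving algebraic multiplicities, which is more than enough for the spectral-radius application you correctly identify.
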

\begin{proof}
Let $\mat{A}=\mat{V}\mat{\Lambda}\mat{V}^{-1}$ be the eigendecomposition of $\mat{A}$ such that the columns of $\mat{V}$ are the eigenvectors and scaled by $\mat{\Lambda}$. Then
$ \mat{B}\mat{A}\mat{B}^{-1} = \left(\mat{B}\mat{V}\right)\mat{\Lambda}\left(\mat{B}\mat{V}\right)^{-1} $
such that the the columns of $\mat{B}\mat{V}$ are the eigenvectors of $\mat{B}\mat{A}\mat{B}^{-1}$ and scaled by the same eigenvalues $\mat{\Lambda}$. 
\end{proof}

\begin{proposition}
\label{prop:matrix_zp_special_eigenvalues}
The TD projection operator $\tdProj$ as defined in Equation~\eqref{eq:matrix_zp}  can be as decomposed as shown in Equation~\eqref{eq:z_eigenvalues} I.e. $\tdProj$ of shape $S \times S$ has at least $(S-F)$ zero eigenvalues. The remaining $F$ eigenvalues are each either 0 or 1.
\end{proposition}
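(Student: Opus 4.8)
The plan is to show that $\tdProj$ is an \emph{idempotent} matrix (an oblique projection onto the column space of $\Phi$), after which both assertions follow from standard facts about projections. First I would verify idempotency directly from the definition $\tdProj = \Phi \tdX^{-1} \trans{\Phi} \Dmu$ together with $\tdX = \trans{\Phi}\Dmu\Phi$:
\[
\tdProj^2 = \Phi \tdX^{-1} \left(\trans{\Phi}\Dmu\Phi\right) \tdX^{-1}\trans{\Phi}\Dmu = \Phi \tdX^{-1}\tdX\tdX^{-1}\trans{\Phi}\Dmu = \Phi\tdX^{-1}\trans{\Phi}\Dmu = \tdProj .
\]
The only non-routine step is recognizing that the inner factor $\trans{\Phi}\Dmu\Phi$ is exactly $\tdX$, which produces the cancellation $\tdX^{-1}\tdX = \mat{I}_{F\times F}$; everything else is bookkeeping.

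Next I would extract the eigenvalue structure from idempotency. If $\tdProj v = \lambda v$ for $v\neq 0$, then $\lambda v = \tdProj v = \tdProj^2 v = \lambda^2 v$, so $\lambda^2=\lambda$ and hence $\lambda\in\{0,1\}$: every eigenvalue is $0$ or $1$. Moreover the minimal polynomial of $\tdProj$ divides $x(x-1)$, which has distinct roots, so $\tdProj$ is diagonalizable. Thus there is an invertible $\mat{V}$ with $\tdProj = \mat{V}\,\mathrm{diag}(\lambda_1,\dots,\lambda_S)\,\mat{V}^{-1}$ and each $\lambda_i\in\{0,1\}$; reordering the columns of $\mat{V}$ so that the unit eigenvalues appear first puts the diagonal into precisely the block form of Equation~\eqref{eq:z_eigenvalues}, where $f$ counts the unit eigenvalues.

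Finally I would bound the number of nonzero eigenvalues. Since $\tdProj$ factors through the $F\times F$ matrix $\tdX^{-1}$ (equivalently, its image is contained in the column space of $\Phi$, of dimension at most $F$), we have $\mathrm{rank}(\tdProj)\le F$, so $\tdProj$ carries at least $S-F$ zero eigenvalues and $f\le F$. By the previous paragraph the remaining $F$ eigenvalues lie in $\{0,1\}$, which is exactly the claim. If one additionally invokes the assumed linear independence of the columns of $\Phi$, the cyclic trace identity $\mathrm{tr}(\tdProj)=\mathrm{tr}\!\left(\tdX^{-1}\trans{\Phi}\Dmu\Phi\right)=\mathrm{tr}(\mat{I}_{F\times F})=F$ pins down $f=F$ exactly, but this sharper count is not needed for the stated proposition. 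The main obstacle is really just the first step: seeing that $\tdProj$ is a projection and executing the cancellation; the diagonalizability of idempotents, the rank bound, and the cosmetic reordering into the block form are all routine.
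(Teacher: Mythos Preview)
Your proof is correct and takes a genuinely different route from the paper's. The paper introduces $\tdU \defeq \trans{\Phi}\Dmuroot$ and rewrites $\tdProj = \Dmuinvroot\,\tdU^{\dagger}\tdU\,\Dmuroot$, then argues that $\tdU^{\dagger}\tdU$ is an orthogonal projection (eigenvalues in $\{0,1\}$) and invokes a separate lemma that similarity preserves eigenvalues; the rank bound is obtained the same way you do it. Your approach sidesteps the pseudo-inverse and the auxiliary similarity lemma by checking $\tdProj^2=\tdProj$ in one line and then reading off the eigenvalue set and diagonalizability from the minimal-polynomial factorization $x(x-1)$. This is more elementary and self-contained; the paper's detour, on the other hand, exposes the extra structural fact that $\tdProj$ is similar (via $\Dmuroot$) to an \emph{orthogonal} projection, which is not needed for the proposition but may be of independent interest. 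Your bonus trace computation pinning down $f=F$ under the paper's standing assumption that $\Phi$ has full column rank is also correct and is a nice sharpening the paper does not state.
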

\begin{proof}
Let $\Dmuroot$ be the element-wise square root of the diagonal matrix $\Dmu$, let $\tdU \defeq \trans{\Phi} \Dmuroot$ and note that $\tdU^{\dagger}=\trans{\tdU} \left( \tdU \trans{\tdU} \right)^{-1}$ is the pseudo-inverse of $\tdU$.
We observe that 
\begin{align*}
\tdProj 
    &\defeq \Phi \left( \trans{\Phi} \Dmu \Phi \right)^{-1}  \trans{\Phi} \Dmu \\
    &= \Dmuinvroot \trans{\tdU} \left( \tdU \trans{\tdU} \right)^{-1} \tdU \Dmuroot \\
    &= \Dmuinvroot \tdU^{\dagger} \tdU \Dmuroot
\end{align*}
next we observe that $\tdU^{\dagger} \tdU$ is an orthogonal projection operator and hence has eigenvalues in $\{0, 1\}$. 
Furthermore by Lemma~\ref{lemma:change_of_variable_does_not_change_eigenvalues} we observe that the multiplication by $\Dmuroot$ and its inverse may change the eigenvectors of a matrix but does not change the eigenvalues. Hence $\tdProj$ has the same eigenvalues as $\tdU^{\dagger} \tdU$ i.e. only 0s and 1s.

Finally we observe that $\tdX = \trans{\Phi} \Dmu \Phi $ has shape $F \times F$ which restricts the rank of $\tdProj$ to $F$. Hence at most $F$ eigenvalues of $\tdProj$ can be non-zero and at least $S-F$ eigenvalues must be $0$.
\end{proof}

\begin{proposition}
\label{prop:matrix_z_spectral_bound}
$\spectral{\mat{Z}} \leq 1$ for $\mat{Z} = \mat{V}^{-1} \Ppi \mat{V}$ for any invertible matrix $\mat{V}$ and any stochastic matrix $\Ppi$.
\end{proposition}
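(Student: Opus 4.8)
The plan is to reduce the claim to the elementary fact that a row-stochastic matrix has spectral radius at most $1$, exploiting that $\mat{Z}$ is merely a similarity transform of $\Ppi$. First I would invoke Lemma~\ref{lemma:change_of_variable_does_not_change_eigenvalues} with the roles $\mat{A}=\Ppi$ and $\mat{B}=\mat{V}^{-1}$: since $\mat{Z}=\mat{V}^{-1}\Ppi\mat{V}=\mat{B}\Ppi\mat{B}^{-1}$ and $\mat{B}$ is invertible (because $\mat{V}$ is), the lemma gives that $\mat{Z}$ and $\Ppi$ have exactly the same eigenvalues. Consequently $\spectral{\mat{Z}}=\spectral{\Ppi}$, and it suffices to show $\spectral{\Ppi}\leq 1$.

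To bound $\spectral{\Ppi}$ I would use that the spectral radius is dominated by any induced operator norm. Since $\Ppi$ is stochastic, its rows are nonnegative and sum to one, so the induced $\infty$-norm is $\norminfty{\Ppi}=\max_i\sum_j |(\Ppi)_{ij}|=1$. Hence $\spectral{\Ppi}\leq\norminfty{\Ppi}=1$. Equivalently one could apply Gershgorin's disc theorem: every eigenvalue $\lambda$ satisfies $|\lambda-(\Ppi)_{ii}|\leq\sum_{j\neq i}(\Ppi)_{ij}=1-(\Ppi)_{ii}$, which forces $|\lambda|\leq 1$. Combining with the previous step yields $\spectral{\mat{Z}}=\spectral{\Ppi}\leq 1$, as required.

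There is no genuine obstacle here: the entire content is that similarity preserves eigenvalues and that a stochastic matrix cannot have spectral radius exceeding one. The only point worth stating carefully is that $\mat{V}$ need not be orthogonal or otherwise structured---invertibility alone suffices for the similarity argument---so the bound holds for the arbitrary change-of-basis matrix produced in Proposition~\ref{prop:matrix_zp_special_eigenvalues}.
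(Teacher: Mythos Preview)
Your proposal is correct and follows exactly the paper's approach: invoke Lemma~\ref{lemma:change_of_variable_does_not_change_eigenvalues} to conclude that $\mat{Z}$ and $\Ppi$ share the same eigenvalues, then use stochasticity of $\Ppi$ to bound $\spectral{\Ppi}\leq 1$. The paper states this in a single sentence; your version simply fills in the standard justification (via the $\infty$-norm or Gershgorin) for why a stochastic matrix has spectral radius at most one.
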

\begin{proof}
$\Ppi$ is stochastic hence $\spectral{\Ppi} \leq 1$ and by Lemma~\ref{lemma:change_of_variable_does_not_change_eigenvalues} also $\spectral{\mat{Z}} \leq 1$.
\end{proof}

\subsection{Example for $\spectral{\invXgY} > 1$}

\newcommand{\DmuTwostate}{\begin{bmatrix}
    0.5 & 0 \\
    0 & 0.5
\end{bmatrix}}
\newcommand{\PpiTwostate}{\begin{bmatrix}
    0 & 1 \\
    0 & 1
\end{bmatrix}}
\newcommand{\PhiTwostate}{\begin{bmatrix}
    1 \\
    2
\end{bmatrix}}
\newcommand{\PhiTransTwostate}{\begin{bmatrix}
    1 & 2
\end{bmatrix}}
Based on our experimental insights we conjecture that the $\spectral{\invXgY} < 1$ condition (equivalent to $\spectral{\gamma\tdProj\Ppi}<1$) is often met but not always as one can construct examples where this is not the case.

Consider the Twostate MDP from \cite{Tsitsiklis:1997,Sutton:2016}. Here chained TD correctly predicts the target value, but the MDP can be modified so that chained TD is not able to predict the target value with arbitrary accuracy. In the same MDP off-policy TD diverges, which can be argued to be a less graceful failure mode than being biased. The MDP has zero rewards, two states with a single feature
$\trans{\Phi}=\PhiTransTwostate$, $\gamma=0.99$ and policies are defined such that 
$$\Dmu=\DmuTwostate, \Ppi=\PpiTwostate$$
In this MDP chained TD is asymptotically unbiased (i.e. $\lim_{k\to\infty}\thetak_* = \theta_\pi$) if the rewards are zero, but not for any reward structure.

We first observe that the spectral condition 
$\spectral{\invXgY} < 1$ is not met for large discounts. From
\begin{align*}
    \tdX &= \trans{\Phi} \Dmu \Phi = \PhiTransTwostate \DmuTwostate \PhiTwostate = 2.5 \\
    \tdY &= \trans{\Phi} \Dmu \Ppi \Phi = \PhiTransTwostate \DmuTwostate \PpiTwostate \PhiTwostate = 3
\end{align*}
we observe that 
$\invXgY = \gamma\frac{3}{2.5}$
and hence
$\spectral{\invXgY} = \gamma\frac{3}{2.5}$
which is larger than $1$ for discounts $\gamma > 5/6$.

We can further investigate the bias $\thetak_*-\thetapi$ at each $k$ using Proposition 2: 
\begin{align}
    \thetak_*-\thetapi
    &=\gamma^k\invXYi{k}\left(\theta^0 - \tdA^{-1} \tdb  \right) \\
    &= \left( \gamma\frac{3}{2.5} \right)^k \left(\theta^0 - \tdA^{-1} \tdb  \right)
\end{align}
For $\gamma > 5/6$ the asymptotic bias ($\lim_{k\to \infty}\thetak_*-\thetapi$) can only be zero if $\theta^0 = \tdA^{-1} \tdb$. For our heuristic where $\theta^0 \defeq \tdAmu^{-1} \tdbmu$ this would only be the case if $\tdA^{-1} \tdb = \tdAmu^{-1} \tdbmu$  -- for example for zero rewards or $\mu=\pi$ but not in general. 
Despite not being asymptotically unbiased in this example each value function of chained TD is guaranteed to converge to the fixed point $\thetak_*$. Hence it biased and convergent for fixed $k$. This is an improvement over regular TD which diverges for this and other MDPs like Baird's counter example. Finally recall that chained TD is both convergent and asymptotically unbiased for Baird's counter example with and without rewards as we showed empirically. There $\tdA$ has negative eigenvalues but $\spectral{\invXgY} < 1$.

\section{Details on Gradient Normalization}
In Figure 2 (center) of the main paper we showed how concurrent estimation oscillates prior to convergence. We also mention that a simple mitigation technique of Gradient Normalization can be used to reduce those oscillations. We only use this normalization for the experiment in Figure 2 (right). Given any expected TD update vector $g$ it transforms it into $g'=\frac{g}{\norm{g}}$ prior to the update.

The presented experiment is intended to motivate further research into such techniques.
A detailed evaluation is out of scope of this paper. All other experiments were run without it.

\section{Learning Curves}
In Figure~\ref{fig:comparison_all_best_selected} we present learning curves corresponding to Table 1 in the main paper.
\begin{figure*}[t]
\centering
\includegraphics[width=0.8\textwidth]{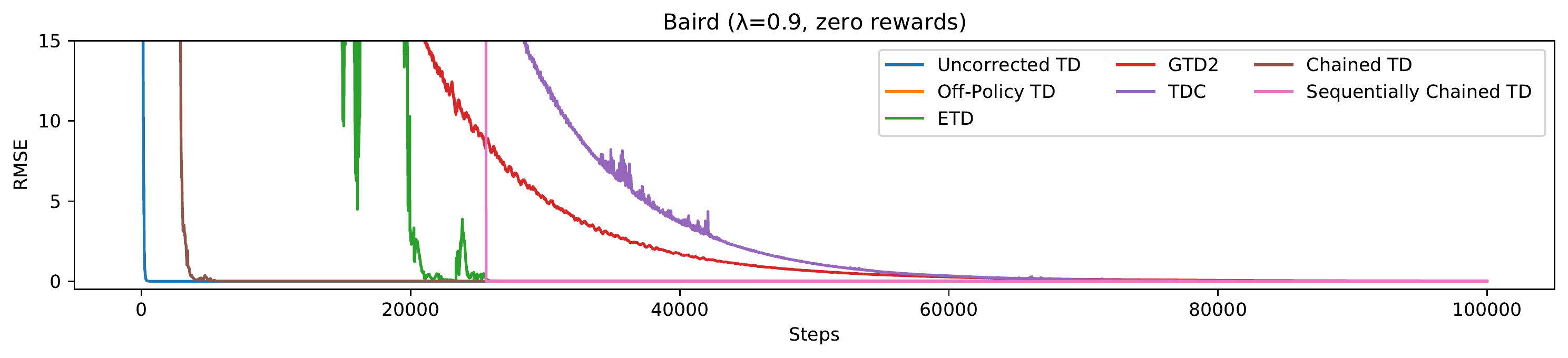}
\includegraphics[width=0.8\textwidth]{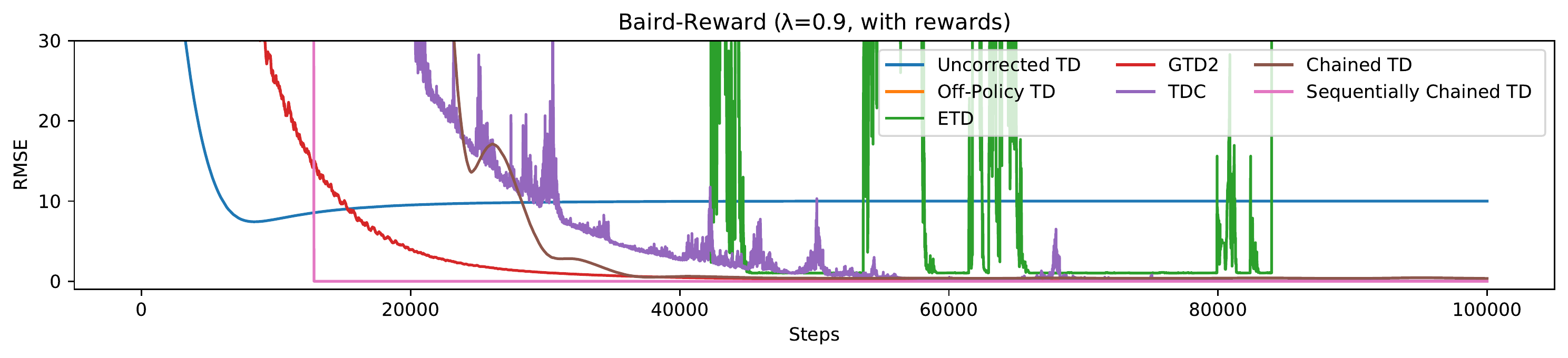}
\includegraphics[width=0.8\textwidth]{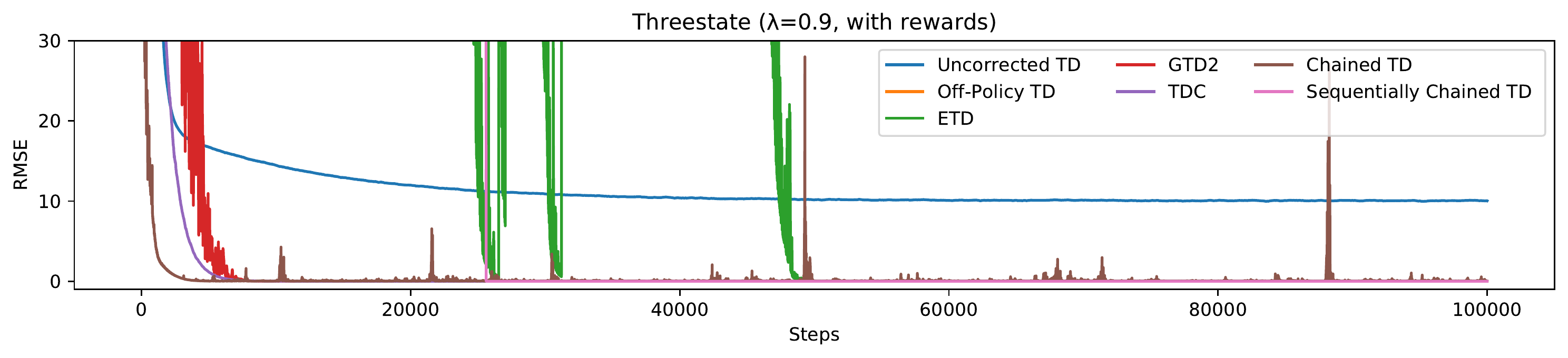}
\includegraphics[width=0.8\textwidth]{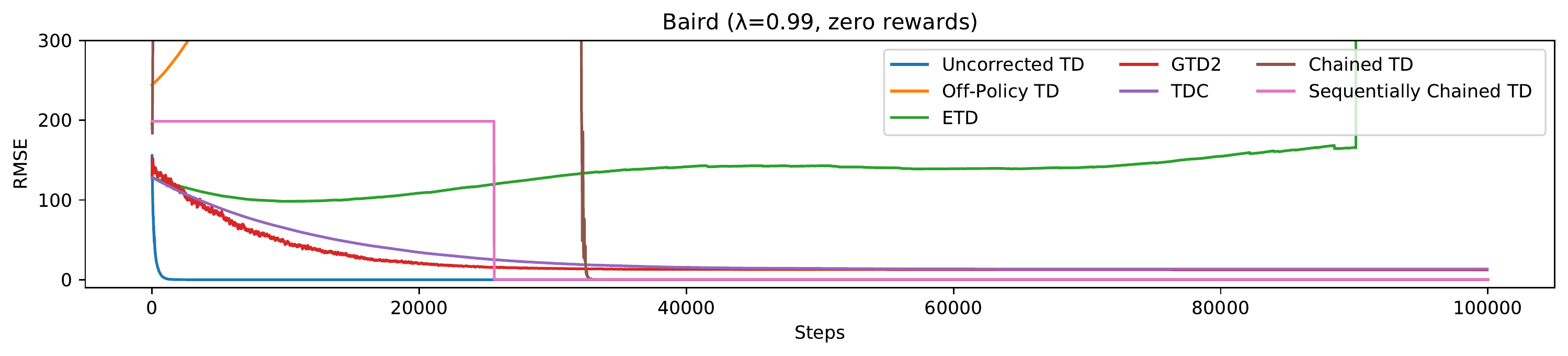}
\includegraphics[width=0.8\textwidth]{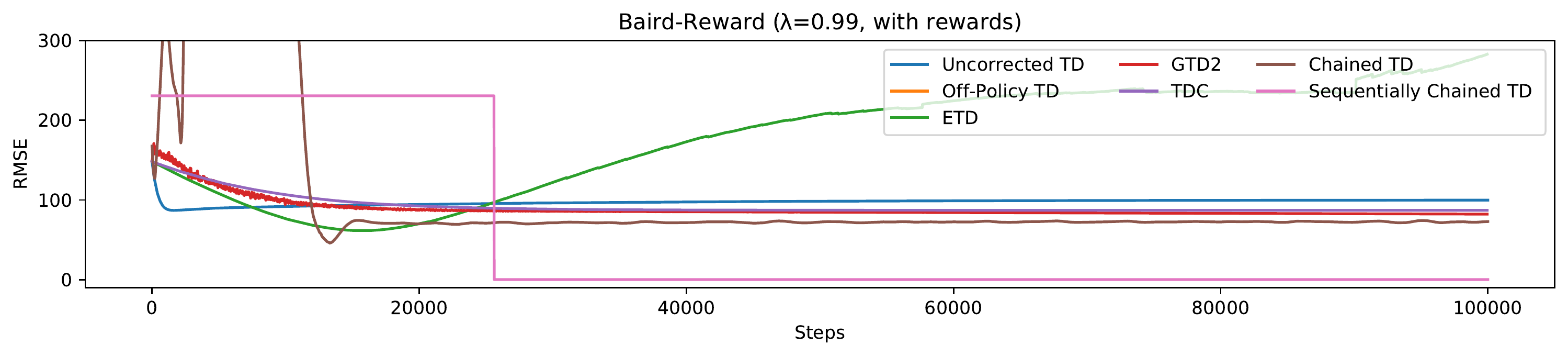}
\includegraphics[width=0.8\textwidth]{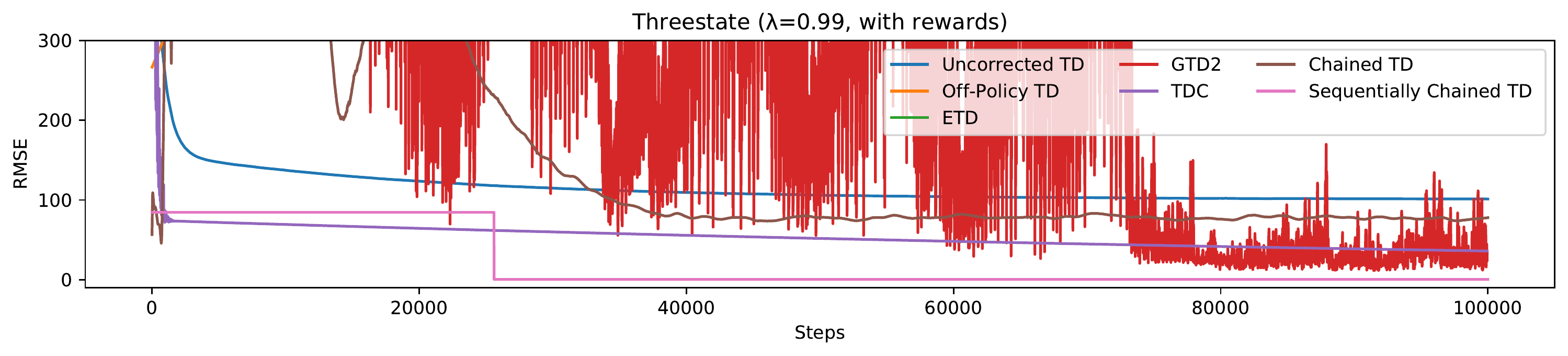}
\caption{Learning process measured in RMSE over 100 validation seeds of the 1-step TD algorithms and MDPs corresponding to Table 1 in the paper. Note that Off-Policy TD is often not visible as it diverged quickly.
Observe that MDPs with large discount and rewards (Baird-Reward and Threestate) are the most challenging and that only sequentially chained TD learning obtains RMSE close to 0. The hyper-parameters for each algorithm ($\alpha$, $\beta$ for GTD2, TDC; $\alpha$, $k$ for chains; and $\alpha$ otherwise) were selected to minimize error averaged over the final $50\%$ of training on 10 separate seeds.}
\label{fig:comparison_all_best_selected}
\end{figure*}

\end{document}